\documentclass{article}

\usepackage{arxiv}

\usepackage[utf8]{inputenc} 
\usepackage[T1]{fontenc}    
\usepackage{hyperref}       
\usepackage{url}            
\usepackage{booktabs}       
\usepackage{amsfonts}       
\usepackage{nicefrac}       
\usepackage{microtype}      
\usepackage{lipsum}

\usepackage{graphicx}    
\usepackage{subcaption}  

\usepackage{amssymb}
\usepackage{amsmath}
\usepackage{amsthm}

\usepackage{algorithm}
\usepackage{algpseudocode}

\newtheorem{definition}{Definition}
\newtheorem{theorem}{Theorem}
\newtheorem{lemma}{Lemma}
\newtheorem{proposition}{Proposition}


\DeclareMathOperator*{\argmin}{arg\,min}

\usepackage{xparse}
\ExplSyntaxOn
\NewDocumentCommand{\Rowvec}{ O{,} m }
 {
  \vector_main:nnnn { p } { & } { #1 } { #2 }
 }
\NewDocumentCommand{\Colvec}{ O{,} m }
 {
  \vector_main:nnnn { p } { \\ } { #1 } { #2 }
 }

\seq_new:N \l__vector_arg_seq
\cs_new_protected:Npn \vector_main:nnnn #1 #2 #3 #4
 {
  \seq_set_split:Nnn \l__vector_arg_seq { #3 } { #4 }
  \begin{#1matrix}
  \seq_use:Nnnn \l__vector_arg_seq { #2 } { #2 } { #2 }
  \end{#1matrix}
 }
\ExplSyntaxOff

\title{R\'enyi Differentially Private ADMM for Non-Smooth Regularized Optimization}

\author{
  Chen Chen \\
  Department of Computer Science \\
  University of Georgia\\
  Athens, GA 30602\\
  \And
  Jaewoo Lee \\
  Department of Computer Science \\
  University of Georgia\\
  Athens, GA 30602\\
}

\begin{document}
\maketitle

\begin{abstract}

In this paper we consider the problem of minimizing composite
objective functions consisting of a convex differentiable loss
function plus a non-smooth regularization term, such as $L_1$ norm or
nuclear norm, under R\'enyi differential privacy (RDP). To solve the
problem, we propose two stochastic 
alternating direction method of multipliers (ADMM) algorithms:
ssADMM based on gradient perturbation and mpADMM based on output
perturbation. Both algorithms decompose the original problem into
sub-problems that have closed-form solutions. The first algorithm,
ssADMM, applies the recent privacy amplification result for RDP to
reduce the amount of noise to add. The second algorithm, mpADMM,
numerically computes the sensitivity of ADMM variable updates and
releases the updated parameter vector at the end of each epoch.
We compare the performance of our
algorithms with several baseline algorithms on both real and
simulated datasets.
Experimental results show that, in high privacy regimes (small
$\epsilon$), ssADMM and mpADMM outperform other baseline algorithms in 
terms of classification and feature selection performance,
respectively. 

\end{abstract}

\section{Introduction}
\label{sec:intro}

Concerns on privacy of individuals in the data used for training
machine learning models have led to extensive research on private
model building
techniques~\cite{chaudhuri2011differentially,kifer2012private,bassily2014private,abadi2016deep,zhang2017efficient,wang2017differentially,chen2019renyi},
especially in the context of Empirical Risk Minimization
(ERM).
Let $D = (d_1, d_2, \ldots, d_n)$ be a dataset, where $d_i \in \mathcal{D}$.
Many machine learning problems can be formulated as
regularized optimization problems of form:
\begin{equation} \label{eq:finite_sum_problem}
  \min_{x \in \mathbb R^p} F(x) := \frac{1}{n}\sum_{i=1}^n f(x, d_i) + \lambda h(x)
\end{equation}
where $\lambda > 0$ is a regularization coefficient, 
$f:\mathbb R^p\times \mathcal{D} \rightarrow \mathbb R$
is a smooth convex loss function, and $h:\mathbb R^p \rightarrow \mathbb R$ is a simple
convex \emph{non-smooth} regularizer such as $L_1$-norm or nuclear
norm. This formulation has received substantial attention as it arises 
in many interesting applications of machine learning such as generalized
lasso~\cite{tibshirani2011solution}, matrix
recovery~\cite{zhang2018primal,liu2016blessing}, and a class of $L_1$
regularized problems.
Despite recent advances in methods for differentially
private ERM, many existing solutions are not directly applicable to the
problem in~\eqref{eq:finite_sum_problem} due to requirement for
differentiability~\cite{bassily2014private,abadi2016deep,zhang2017efficient,chen2019renyi} or strong
convexity~\cite{chaudhuri2011differentially} of the regularization
term $h(x)$.
Alternating direction method of multipliers
(ADMM)~\cite{gabay1976dual} has shown to be effective in solving optimization
problems with complicated structure regularization.

In this paper, we propose two stochastic ADMM algorithms that satisfy
R\'enyi Differential Privacy (RDP), namely subsampled stochastic ADMM (ssADMM)
and model perturbation based ADMM (mpADMM). The first algorithm has
the following key features. First, ssADMM is scalable and fast. The
algorithm splits the composite objective function into 
differentiable and non-smooth terms, $\sum_{i} f(x, d_i)$ and $h(x)$,
using the ADMM framework. The differentiable term is further 
approximated by the first order Taylor expansion and linearization as
in~\cite{ouyang2013stochastic}. This approximated augmented
Lagrangian function has a simple analytical solution.
For the non-smooth regularization term $h(x)$, ssADMM applies proximal
mappings. For many non-smooth regularization function popularly used in
machine learning, such as $L_1$-norm, SCAD~\cite{fan2001variable}, and
MCP~\cite{zhang2010nearly}, those proximal mappings yield closed form
solutions. Therefore, both subproblems can be solved efficiently.

Second, ssADMM makes use of recently proposed
\emph{privacy amplification} lemma~\cite{wang2018subsampled} to tightly
bound the total privacy loss across many iterations.
In the closed-form solution of the modified augmented Lagrangian
function, the only data dependent term is the gradient $\nabla
f(x^k)$, where $x^k$ denotes the value of $x$ at iteration $k$. The
algorithm computes the gradient $\nabla f(x^k)$ using a randomly  
\emph{subsampled} data and add Gaussian noise to ensure $(\alpha,
\epsilon_k)$-RDP, which allows us to exploit the randomness in the
subsampling and to introduce less noise to each iteration.

The second algorithm, mpADMM, takes the output perturbation approach
but substantially differs from the original method.
Unlike the original method
which releases model parameters once only at the end, the proposed
method releases the output after each epoch. For each epoch, 
we numerically compute the sensitivity of both primal and dual
variable updates in ADMM and release the parameter vector using the
Gaussian mechanism. The algorithm uses the released (noisy) output as
the starting value for the next epoch.

Our contributions are summarized as follows: 
\begin{itemize}
    \item We propose two efficient R\'enyi differentially private
      algorithms, based on stochastic ADMM, for solving non-smooth
      convex optimization problems. In our proposed ssADMM, each
      subproblem is solved exactly in closed form.
    \item We apply the recent privacy amplification result for RDP to
      stochastic ADMM and show that the inherent randomness in
      subsampling process can be used to achieve stronger privacy
      protection.
    \item We empirically show the effectiveness of the proposed
      algorithms by performing extensive empirical evaluations on
      generalized linear models and comparing with other
      baseline algorithms.  The results show that, in high privacy
      regimes (small $\epsilon$), ssADMM and mpADMM outperform other
      baseline algorithms in terms of classification and feature
      selection performance, respectively.
\end{itemize}
The rest of this paper are organized as follow:
Section~\ref{sec:related} summarizes related work. In
Section~\ref{sec:prelim}, we provide background on R\'enyi
differential privacy and ADMM. Section~\ref{sec:algorithm} introduces
the proposed R\'enyi differentially private ADMM
algorithms. Section~\ref{sec:experiment} provides the performance
evaluations on both synthetic and real
datasets. Section~\ref{sec:conclusion} concludes the paper.  

\section{Related Work}
\label{sec:related}

Many works have been done to solve the empirical risk minimization
problem under differential privacy. Generally, there are three types
of algorithms proposed. Output perturbation algorithms perturb the model
parameters based on sensitivity, for example,
\cite{chaudhuri2011differentially} analyzed the sensitivity of optimal
solutions trained between neighboring databases; \cite{zhang2017efficient}
tackled the case when full gradient descent is applied; and
\cite{wu2017bolt} and \cite{chen2019renyi} analyzed the situation of
applying stochastic gradient descent on permuting
mini-batches. Objective perturbation algorithms perturb the training
objective functions, and the privacy guarantee is subject to an exact
solution of the ERM problem: \cite{chaudhuri2011differentially}
presented the first objective perturbation technique, and it is
extended by \cite{kifer2012private}. Gradient perturbation algorithms
perturb the (stochastic) gradients used for model updating by first-order
optimization methods, and use a composition technique to quantify the
overall privacy leak for multiple access of the data through gradient
calculation. For example, \cite{bassily2014private} proposed ``strong
composition'' theorem, then \cite{abadi2016deep} proposed ``moment
accountant'' method, which is also used in
\cite{wang2017differentially} and \cite{koskela2018learning}. The
R\'eyni differential privacy was introduced by
\cite{mironov2017renyi}, which can also be applied in gradient
perturbation, especially after \cite{wang2018subsampled} proposed its
amplification by subsampling results. 

Alternating Direction Method of Multipliers (ADMM) is an old algorithm to solve optimization problems \cite{boyd2011distributed}. It has been extensively studied, and applied in many domains such as outlier recovery \cite{tan2013traffic}, image processing \cite{chan2016plug}, and sensor detection \cite{dhingra2014admm}. In addition to its original version, many variations has been presented, such as \cite{esser2009applications, yang2013linearized} and \cite{ouyang2013stochastic}. Several ADMM based differentially private algorithms have been presented, for example, \cite{wang2019differential} applied objective perturbation technique on the original ADMM problem, \cite{zhang2017dynamic} and \cite{zhang2018improving} applied output and objective perturbation technique, and \cite{huang2019dp} applied gradient perturbation technique on ADMM-based algorithms in distributed settings.

$L_1$ regularized ERM problem was first proposed for linear regression, that is least absolute shrinkage and selection operator (LASSO) \cite{tibshirani1996regression}. Some variants of LASSO exists, such as \cite{zou2006adaptive} and \cite{park2008bayesian}. It has been used for classification problems, and many algorithms for solving $L_1$ regularized generalized linear models were presented, such as \cite{lee2006efficient}, \cite{park2007l1}, and \cite{bian2019parallel}. \cite{ng2004feature} and \cite{goodman2004exponential} has shown that $L_1$ regularized classification has good performance in feature selection. Limited to the assumption on the loss function, many differentially private ERM algorithms cannot be directly applied on $L_1$ regularized classification, with a few exceptions such as \cite{abadi2016deep, wang2019differential}, and \cite{huang2019dp}.

\section{Preliminaries}
\label{sec:prelim}

In this section we introduce relative background of this paper. We will start with definitions and lemmas in differential privacy and R\'enyi differential privacy, the $L_1$-regularized classification problem we aim to solve, and then the ADMM algorithm based on which we proposed our algorithms. 

We assume a dataset $D=\{d_1, ..., d_n\} \sim \mathcal D^n$ is a set collected from $n$ individuals from an unknown population distribution $\mathcal D$, where $d_i=(s_i, l_i)$ for $i=1,...,n$ is a record of one individual, with $s_i$ being a vector of features of dimension $p$, and $l_i\in\{-1, +1\}$ being its label. Two datasets $D$ and $D'$ are considered \emph{neighboring}, if $D'$ can be obtained by replacing one record with another one from $\mathcal D$, notated as $D\sim D'$. We use $x,y,z$ to denote model parameters, and $\|\cdot\|_1$ (resp. $\|\cdot\|_2$) as $L_1$ (resp. $L_2$) norm of a vector.

\subsection{Differential Privacy}

Differential privacy is so far the standard standard for protecting the privacy of sensitive datasets. Its formal definition is stated as:

\begin{definition}[$(\epsilon, \delta)$-Differential Privacy (DP)]
\cite{dwork2006calibrating} \cite{dwork2006our} Given privacy parameters $\epsilon\geq0, 0\leq\delta\leq 1$, a randomized mechanism (algorithm) $\mathcal M$ satisfies $(\epsilon, \delta)$-DP if for every event $S\subseteq range(M)$, and for every pair of neighboring datasets $D\sim D'$,
\begin{equation}
    \Pr[\mathcal M(D)\in S] \leq e^{\epsilon} \Pr[\mathcal M(D')\in S] + \delta
\end{equation}
\end{definition}

If $\delta=0$, it is called \emph{pure} differential privacy, and $\delta>0$ is called \emph{approximate} differential privacy.

With pure differential privacy, even the strongest attacker with arbitrary background information has limited ability to make inferences on the unknown record(s). With approximated differential privacy, this guarantee holds with a high chance, while failure of privacy preserving happens with probability at most $\delta$ (informally called ``all-bets-are-off''). In practice, $\delta$ should be taken significantly small, such as $\Theta(n^{-2})$.

While approximate DP is a relaxation of pure DP, some other relaxations of pure DP also exists, such as zero-concentrated differential privacy (zCDP) \cite{bun2016concentrated} and R\'enyi Differential Privacy (RDP) \cite{mironov2017renyi}. These relaxations do not have such semantic meanings as approximate DP, but they are shown to stand between pure and approximate DP: they provide weaker protection than pure DP, but stronger protection than approximated DP, for any given $\delta>0$. In this paper, we will focus on R\'enyi Differential Privacy.

\subsection{R\'enyi Differential Privacy}

Define $Z=\frac{\Pr[\mathcal M(D)\in S]}{\Pr[\mathcal M(D')\in S]}$ as the privacy loss random variable, instead of requiring it always inside range $[-\epsilon, \epsilon]$ as pure DP, R\'enyi differential privacy (RDP) constraints its expectation by R\'enyi divergence.

\begin{definition}[$(\alpha, \epsilon)$-R\'enyi Differential Privacy (RDP)]
\cite{mironov2017renyi} Given a real number $\alpha\in(1, +\infty)$ and privacy parameter $\epsilon\geq0$, a randomized mechanism (algorithm) $\mathcal M$ satisfies $(\alpha, \epsilon)$-RDP if for every pair of neighboring datasets $D\sim D'$, the R\'enyi $\alpha$-divergence between $\mathcal M(D)$ and $\mathcal M(D')$ satisfies
\begin{equation}
    D_{\alpha}[\mathcal M(D) \| \mathcal M(D')] \leq \epsilon
\end{equation}
\end{definition}

That is, the privacy parameter $\epsilon$ bounds the moment $\alpha$ of the R\'enyi divergence $D_\alpha$, which is defined as

\begin{definition}[R\'enyi Divergence]
For probability distributions $\mathcal M(D)$ and $\mathcal M(D')$ over a set $\Omega$, and let $\alpha\in (1, +\infty)$. Then R\'enyi $\alpha$-divergence is 
\begin{equation}
    D_\alpha(\mathcal M(D) \| \mathcal M(D')) := \frac{1}{\alpha-1}\log\mathbb E_{x\sim \mathcal M(D')}\bigg[ \bigg(\frac{P_{\mathcal M(D)}(x)}{P_{\mathcal M(D')}(x)}\bigg)^\alpha\bigg]
\end{equation}
\end{definition}

One method to achieve RDP is through the Gaussian mechanism: when a query $q(D)$ is taken over the dataset, the Gaussian mechanism adds a Gaussian noise $\gamma\sim \mathcal N(0, \sigma^2\textbf{I}_k)$, and release perturbed $q(D) + \gamma$.

\begin{lemma}[Gaussian Mechanism]
\cite{mironov2017renyi} Let $q: \mathcal D^n\rightarrow \mathbb R^k$ be a vector-valued function over datasets. Let $\mathcal M$ be a mechanism releasing $q(D) + \gamma$ where $\gamma\sim \mathcal N(0, \sigma^2\textbf{I}_k)$, then for any $D\sim D'$ and any $\alpha\in(1, +\infty)$,
\begin{equation}
    D_\alpha(\mathcal M(D) \| \mathcal M(D')\leq \alpha\Delta_2^2(q) / (2\sigma^2)
\end{equation}
\end{lemma}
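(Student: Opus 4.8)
The plan is to recognize that the two output distributions are isotropic Gaussians that differ only by a translation of the mean, and then to use the known closed form of the R\'enyi divergence between such Gaussians. Concretely, $\mathcal M(D) \sim \mathcal N(q(D),\sigma^2\mathbf I_k)$ and $\mathcal M(D') \sim \mathcal N(q(D'),\sigma^2\mathbf I_k)$, so writing $v := q(D)-q(D')$ the lemma reduces to proving the identity
\begin{equation*}
D_\alpha\big(\mathcal N(q(D),\sigma^2\mathbf I_k)\,\|\,\mathcal N(q(D'),\sigma^2\mathbf I_k)\big) = \frac{\alpha\|v\|_2^2}{2\sigma^2},
\end{equation*}
after which the claimed bound follows immediately from the definition of $L_2$-sensitivity, since $\|v\|_2 = \|q(D)-q(D')\|_2 \le \Delta_2(q)$.

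To prove the identity I would first reduce to one dimension. Since $\mathcal N(0,\sigma^2\mathbf I_k)$ is rotationally invariant, I pick an orthonormal basis whose first vector points along $v$; in these coordinates both densities factor as products over the $k$ coordinates, the factors along directions $2,\dots,k$ are identical for $\mathcal M(D)$ and $\mathcal M(D')$ and so cancel in the density ratio, and only the first coordinate carries the shift of magnitude $\|v\|_2$. It therefore suffices to evaluate, for the scalar densities $p_a$ of $\mathcal N(a,\sigma^2)$ with $a=\|v\|_2$ and $p_0$ of $\mathcal N(0,\sigma^2)$, the moment $\mathbb E_{x\sim\mathcal N(0,\sigma^2)}\big[(p_a(x)/p_0(x))^\alpha\big]$. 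Here $\log\big(p_a(x)/p_0(x)\big)=(2ax-a^2)/(2\sigma^2)$ is affine in $x$, so up to the normalizing constant the integrand is $\exp\!\big(\tfrac{1}{2\sigma^2}(-x^2+2\alpha a x-\alpha a^2)\big)$; completing the square in $x$ pulls out the factor $\exp\!\big(\alpha(\alpha-1)a^2/(2\sigma^2)\big)$ and leaves a standard Gaussian integral equal to $1$. Taking $\tfrac{1}{\alpha-1}\log$ of the result gives exactly $\alpha a^2/(2\sigma^2)$, as required.

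The computation is routine, and the only point needing care is the bookkeeping in completing the square: the residual constant must come out as $\alpha(\alpha-1)a^2/(2\sigma^2)$, because it is the factor $(\alpha-1)$ — not $\alpha$ — that must cancel the $\tfrac{1}{\alpha-1}$ in front of the logarithm in the definition of $D_\alpha$. It is also worth noting that because the two covariances are equal, the coefficient of $x^2$ in the exponent remains $-\tfrac{1}{2\sigma^2}<0$ for every $\alpha\in(1,\infty)$, so the integral always converges and no upper restriction on $\alpha$ is needed. Combining the identity with $\|q(D)-q(D')\|_2\le\Delta_2(q)$ then yields $D_\alpha(\mathcal M(D)\|\mathcal M(D')) = \alpha\|q(D)-q(D')\|_2^2/(2\sigma^2)\le\alpha\Delta_2^2(q)/(2\sigma^2)$, which is the stated bound.
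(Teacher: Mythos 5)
Your proposal is correct: the reduction by rotational invariance to a one-dimensional shifted Gaussian, the completion of the square yielding the factor $\exp\big(\alpha(\alpha-1)a^2/(2\sigma^2)\big)$, and the final bound via $\|q(D)-q(D')\|_2\le\Delta_2(q)$ are all sound, and this is exactly the standard closed-form computation of the R\'enyi divergence between equal-covariance Gaussians. The paper itself gives no proof of this lemma --- it is imported by citation from Mironov (2017) --- and your argument matches the derivation used in that reference.
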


Gaussian mechanism relies on the $L_2$ sensitivity:

\begin{definition}[$L_2$ sensitivity]
Let $q: \mathcal D^n\rightarrow \mathbb R^k$ be a vector-valued function over datasets. The $L_2$ sensitivity of $q$, denoted as $\Delta_2(q)$, is defined as
\begin{equation}
    \Delta_2(q) = \sup_{D\sim D'}\|q(D) - q(D')\|_2
\end{equation}
\end{definition}
Therefore, when scale the variance $\sigma^2=\alpha\Delta_2^2(q)/(2\epsilon)$, then $\mathcal M$ satisfies $(\alpha, \epsilon)$-RDP.

Gaussian mechanism makes the mechanism $\mathcal M$ satisfy $(\alpha, \epsilon)$-RDP for a series of $\alpha$, so we can use $\epsilon(\alpha)$ to denote the privacy $\epsilon$ under moment $\alpha$. In empirical risk minimization algorithms, it is common that the mechanism is taken over a randomized subsample of the dataset $B$, instead of the whole dataset $D$. Then, application Gaussian Mechanism on the subsample $B$ would satisfy $(\alpha, \epsilon(\alpha))$-RDP with respect to $B$. Due to the subsampling procedure, the mechanism would satisfy an amplified privacy with respect to the whole dataset $D$, as given by the following lemma:

\begin{lemma}[RDP for subsampled mechanism]
\cite{wang2018subsampled} For a randomized mechanism $\mathcal M$ and a dataset $D\sim \mathcal D^n$, define $\mathcal M\circ\textsc{subsample}$ as (1) subsample without replacement $m$ datapoints from the dataset (denote $q=m/n$ as sampling ratio); (2) apply $\mathcal M$ on the subsampled dataset as input, then if $\mathcal M$ satisfies $(\alpha, \epsilon(\alpha))$-RDP with respect to the subsample for all integers $\alpha>2$, then the new randomized mechanism $\mathcal M\circ\textsc{subsample}$ satisfies $(\alpha, \epsilon'(\alpha))$-RDP with respect to $D$, where
\begin{equation}
\begin{split}
    \epsilon'(\alpha) \leq \frac{1}{\alpha - 1} \log\big( & 1  + q^2{\alpha\choose 2} \min\big\{ 4(e^{\epsilon(2)}-1), 2e^{\epsilon(2)} \big\} \\
     & + \sum_{j=3}^\alpha q^j {\alpha\choose j} 2e^{(j-1)\epsilon(j)}\big)
\end{split}
\end{equation}
\end{lemma}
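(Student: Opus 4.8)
Fix an arbitrary pair of neighboring datasets $D\sim D'$, say $D=\{d_1,\dots,d_{n-1},d^\ast\}$ and $D'=\{d_1,\dots,d_{n-1},{d^\ast}'\}$, and let $P$ and $Q$ denote the output distributions of $\mathcal M\circ\textsc{subsample}$ on $D$ and $D'$. The goal is to bound the moment $\mathbb E_{\theta\sim Q}\big[(P(\theta)/Q(\theta))^\alpha\big]$, since $\epsilon'(\alpha)=\frac{1}{\alpha-1}\log$ of this quantity (maximized over $D\sim D'$). The first step is a decomposition according to whether the size‑$m$ subsample contains the differing record. That record is included with probability $q=m/n$, and conditioned on its exclusion the subsample is drawn identically from the $n-1$ shared records. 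Hence
$P=(1-q)P_0+qP_1$ and $Q=(1-q)P_0+qQ_1$,
where $P_0$ is the \emph{common} component ($\mathcal M$ applied to a subsample of shared records only), $P_1$ is $\mathcal M$ applied to a subsample containing $d^\ast$, and $Q_1$ is $\mathcal M$ applied to a subsample containing ${d^\ast}'$.

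Next I would use that $\alpha$ is an integer. Since $P-Q=q(P_1-Q_1)$, we have $P/Q=1+q\,(P_1-Q_1)/Q$, and the binomial theorem gives
\begin{equation*}
  \mathbb E_{Q}\Big[\big(P/Q\big)^{\alpha}\Big]
  =\sum_{j=0}^{\alpha}\binom{\alpha}{j}q^{j}\,
   \mathbb E_{Q}\Big[\big((P_1-Q_1)/Q\big)^{j}\Big].
\end{equation*}
The $j=0$ term equals $1$, and the $j=1$ term vanishes because $\int(P_1-Q_1)=1-1=0$. It therefore remains to bound $\mathbb E_{Q}[((P_1-Q_1)/Q)^{j}]$ for each $2\le j\le\alpha$; for odd $j$ I pass to $\mathbb E_{Q}[|(P_1-Q_1)/Q|^{j}]$, which only enlarges the sum.

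For these moment bounds the crucial device is the pointwise domination $Q\ge(1-q)P_0$, which lets me replace the troublesome denominator $Q^{j-1}$ by $((1-q)P_0)^{j-1}$ and rewrite everything as an expectation under $P_0$, namely $\mathbb E_{Q}[|(P_1-Q_1)/Q|^{j}]\le(1-q)^{-(j-1)}\,\mathbb E_{P_0}[|(P_1-Q_1)/P_0|^{j}]$. To control the right‑hand side I couple the subsamples so that, after conditioning on the shared records selected, $P_1$, $Q_1$, and each atom of $P_0$ are outputs of $\mathcal M$ on genuinely neighboring inputs (they differ by replacing one shared record with $d^\ast$ or ${d^\ast}'$), and then invoke the hypothesis that $\mathcal M$ is $(j,\epsilon(j))$‑RDP, which in divergence form reads $\mathbb E_{P_0}[(P_1/P_0)^{j}]\le e^{(j-1)\epsilon(j)}$ and likewise for $Q_1/P_0$; the mixture structure of $P_0$ is absorbed using joint convexity of $(x,y)\mapsto x^{j}y^{1-j}$ together with Jensen's inequality. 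Combining this with the power‑mean inequality $|P_1-Q_1|^{j}\le 2^{j-1}(|P_1-P_0|^{j}+|Q_1-P_0|^{j})$ yields a bound of the form $2e^{(j-1)\epsilon(j)}$ for $j\ge3$. For $j=2$ I would argue more carefully: $\mathbb E_{P_0}[((P_1-Q_1)/P_0)^2]\le 2\big(\chi^2(P_1\|P_0)+\chi^2(Q_1\|P_0)\big)\le 4(e^{\epsilon(2)}-1)$ using $\chi^2(\mu\|P_0)=\mathbb E_{P_0}[(\mu/P_0)^2]-1\le e^{\epsilon(2)}-1$, while a cruder estimate that avoids the subtraction gives the alternative $2e^{\epsilon(2)}$; keeping the smaller of the two produces the $\min\{4(e^{\epsilon(2)}-1),\,2e^{\epsilon(2)}\}$ factor. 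Substituting these term‑by‑term bounds back into the binomial sum gives the stated inequality.

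The main obstacle I expect is exactly this third step: obtaining the per‑term bounds with the \emph{correct powers of} $q$ and with clean constants. The tempting shortcut $Q\ge qQ_1$ in the denominator leaves a factor $q^{-(j-1)}$ that cancels all but one power of $q$ and destroys the bound, so one must route through the common component $P_0$, whose weight $1-q$ is close to $1$; keeping track of the resulting $(1-q)^{-(j-1)}$ factors and the powers of $2$ from the power‑mean step so that the clean form $2e^{(j-1)\epsilon(j)}$ emerges is the delicate part. Care is also needed in setting up the coupling so that the conditional distributions are genuinely single mechanism outputs on neighboring inputs (so that $\mathcal M$'s RDP hypothesis applies), in handling the mixture appearing in the second argument of the Rényi divergence, and in tracking the direction of the "replace'' neighboring relation so that the final bound is symmetric in $d^\ast$ and ${d^\ast}'$.
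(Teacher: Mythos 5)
The paper never proves this lemma: it is imported verbatim (in slightly weakened form) from \cite{wang2018subsampled}, so your sketch has to be judged against that proof. Your first two steps are exactly the ones used there: the replace-one decomposition $P=(1-q)P_0+qP_1$, $Q=(1-q)P_0+qQ_1$, the binomial expansion of $\mathbb{E}_Q[(P/Q)^\alpha]$ using $P-Q=q(P_1-Q_1)$, and the observation that the $j=0$ term is $1$ and the $j=1$ term vanishes. The coupling idea (pair the size-$(m-1)$ subsample of shared records behind $P_1,Q_1$ with a size-$m$ subsample behind $P_0$ so that all components are outputs of $\mathcal M$ on pairwise neighboring inputs) and the use of joint convexity of $(x,y)\mapsto x^j y^{1-j}$ are also the right ingredients.

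The genuine gap is in your third step: the route you describe cannot produce the stated constants. Lower-bounding the denominator by $Q\ge(1-q)P_0$ and then symmetrizing with the power-mean inequality $|P_1-Q_1|^j\le 2^{j-1}(|P_1-P_0|^j+|Q_1-P_0|^j)$ leaves irreducible factors $(1-q)^{-(j-1)}2^{j-1}$ (and, since $\mathbb{E}_{P_0}[|P_1/P_0-1|^j]$ is not bounded by $e^{(j-1)\epsilon(j)}$ alone, additional additive terms); these do not cancel, so you would only prove a strictly weaker bound than $2e^{(j-1)\epsilon(j)}$, and your $j=2$ bound likewise picks up a spurious $(1-q)^{-1}$. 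The correct argument keeps the full mixture $Q$ in the denominator: joint convexity of $(x,y)\mapsto x^j y^{1-j}$ in the \emph{denominator} argument handles the mixture directly (e.g.\ $\mathbb{E}_Q[(Q_1/Q)^j]\le (1-q)\mathbb{E}_{P_0}[(Q_1/P_0)^j]+q\le e^{(j-1)\epsilon(j)}$), and instead of the power-mean step one uses the sharper elementary inequality $|a-b|^j\le\max(a,b)^j\le a^j+b^j$ for $a,b\ge 0$, so that $\mathbb{E}_Q\big[|(P_1-Q_1)/Q|^j\big]\le \mathbb{E}_Q[(P_1/Q)^j]+\mathbb{E}_Q[(Q_1/Q)^j]\le 2e^{(j-1)\epsilon(j)}$ with no extra factors; for $j=2$ the expansion $(P_1-Q_1)^2\le 2(P_1-P_0)^2+2(Q_1-P_0)^2$ under the coupling gives $4(e^{\epsilon(2)}-1)$, and the crude bound gives the alternative $2e^{\epsilon(2)}$, yielding the $\min$ exactly. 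So the skeleton is right, but as written the per-term estimates — the heart of the lemma — would fail to reach the claimed inequality.
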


Similar as DP, RDP has below composition properties:

\begin{lemma}[RDP composition]
\cite{mironov2017renyi} For randomized mechanisms $\mathcal M_1$ and $\mathcal M_2$ applied on dataset $D$, if $\mathcal M_1$ satisfies $(\alpha, \epsilon_1)$-RDP and $M_2$ satisfies $(\alpha, \epsilon_2)$-RDP, then their composition $\mathcal M_1\circ \mathcal M_2$ satisfies $(\alpha, \epsilon_1 + \epsilon_2)$-RDP.
\end{lemma}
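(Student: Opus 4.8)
The plan is to reduce everything to the definition of R\'enyi divergence applied to the \emph{joint} output distribution of the composed mechanism. Let $P$ (resp.\ $Q$) denote the law of the pair $(\mathcal M_1,\mathcal M_2)$ run on $D$ (resp.\ on $D'$), taken with respect to a common dominating measure, and factor the joint density as $P(y_1,y_2)=P_1(y_1)\,P_2(y_2\mid y_1)$, where $P_1$ is the marginal law of $\mathcal M_1(D)$ and $P_2(\cdot\mid y_1)$ is the conditional law of $\mathcal M_2$ given that $\mathcal M_1$ returned $y_1$; this also covers the adaptive case, in which $\mathcal M_2$ is allowed to read $y_1$, the non-adaptive case being the special case where the conditional does not depend on $y_1$. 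Define $Q_1,Q_2$ analogously from $D'$.

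First I would expand $e^{(\alpha-1)D_\alpha(P\|Q)}=\mathbb E_{(y_1,y_2)\sim Q}[(P(y_1,y_2)/Q(y_1,y_2))^\alpha]$ and substitute the factorization, so the integrand splits as $(P_1(y_1)/Q_1(y_1))^\alpha\cdot(P_2(y_2\mid y_1)/Q_2(y_2\mid y_1))^\alpha$. By Tonelli's theorem I would carry out the $y_2$ integration first, holding $y_1$ fixed; the inner integral is exactly $\mathbb E_{y_2\sim Q_2(\cdot\mid y_1)}[(P_2(y_2\mid y_1)/Q_2(y_2\mid y_1))^\alpha]=e^{(\alpha-1)D_\alpha(\mathcal M_2(D,y_1)\|\mathcal M_2(D',y_1))}$.

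Next I would invoke the hypothesis that $\mathcal M_2$ is $(\alpha,\epsilon_2)$-RDP, which must hold for \emph{every} fixed auxiliary value $y_1$, to bound the inner integral by $e^{(\alpha-1)\epsilon_2}$ uniformly in $y_1$. Pulling this constant out of the remaining integral over $y_1$ leaves $e^{(\alpha-1)\epsilon_2}\cdot\mathbb E_{y_1\sim Q_1}[(P_1(y_1)/Q_1(y_1))^\alpha]=e^{(\alpha-1)\epsilon_2}\cdot e^{(\alpha-1)D_\alpha(\mathcal M_1(D)\|\mathcal M_1(D'))}\le e^{(\alpha-1)(\epsilon_1+\epsilon_2)}$, using the $(\alpha,\epsilon_1)$-RDP hypothesis on $\mathcal M_1$. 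Taking $\tfrac{1}{\alpha-1}\log$ on both sides gives $D_\alpha(P\|Q)\le\epsilon_1+\epsilon_2$, and since $D\sim D'$ was arbitrary this is exactly $(\alpha,\epsilon_1+\epsilon_2)$-RDP for the composition; extending to finitely many mechanisms is then an immediate induction.

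The algebra is routine, so the real work is in the measure-theoretic bookkeeping: choosing a dominating measure so that all four densities are well defined, justifying the factorization of the joint density and the interchange of order of integration when some densities vanish (so that ratios such as $0/0$ must be handled by the usual conventions for $D_\alpha$), and---the subtle point---making sure the RDP guarantee for $\mathcal M_2$ is quantified over \emph{all} realizable auxiliary inputs rather than merely almost every one under $P_1$ or $Q_1$. Nothing extra is needed for non-integer $\alpha$, since the argument uses only the integral form of $D_\alpha$ and never its series expansion.
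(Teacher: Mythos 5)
Your proof is correct: the paper itself states this lemma without proof, citing \cite{mironov2017renyi}, and your argument is precisely the canonical one from that reference (factor the joint density, bound the inner conditional R\'enyi integral by $e^{(\alpha-1)\epsilon_2}$ uniformly in the first output, then bound the outer integral by $e^{(\alpha-1)\epsilon_1}$ and take $\tfrac{1}{\alpha-1}\log$). Your side remarks are also on point, in particular that for the adaptive case the $(\alpha,\epsilon_2)$-RDP guarantee for $\mathcal M_2$ must hold for every realizable auxiliary input, which is exactly what the paper's use of the lemma (composing iterations whose later updates depend on earlier noisy outputs) requires.
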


RDP is said to provide stronger protection than approximate DP, due to below conversion to $(\epsilon, \delta)$-DP:

\begin{proposition}[RDP to $(\epsilon, \delta)$-DP]
\cite{mironov2017renyi} If $\mathcal M$ satisfies $(\alpha, \epsilon)$-RDP, then it satisfies $(\epsilon(\delta), \delta)$-DP for $\epsilon(\delta) \geq \epsilon + \frac{\log(1/\delta)}{\alpha-1}$.
\end{proposition}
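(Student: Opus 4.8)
\section*{Proof plan}

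The plan is to deduce the $(\epsilon(\delta),\delta)$-DP guarantee from the $(\alpha,\epsilon)$-RDP guarantee by controlling the tail of the privacy loss random variable. Fix a pair of neighboring datasets $D\sim D'$, write $P$ and $Q$ for the densities of $\mathcal M(D)$ and $\mathcal M(D')$, and let $L(x)=\log(P(x)/Q(x))$ denote the privacy loss at $x$. For an arbitrary measurable event $S\subseteq \mathrm{range}(\mathcal M)$, I would split $S$ according to whether the privacy loss exceeds the target threshold $\epsilon':=\epsilon+\tfrac{\log(1/\delta)}{\alpha-1}$, writing $S=S_{\le}\cup S_{>}$ with $S_{\le}=S\cap\{x:L(x)\le \epsilon'\}$ and $S_{>}=S\cap\{x:L(x)>\epsilon'\}$.

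On $S_{\le}$ we have $P(x)\le e^{\epsilon'}Q(x)$ pointwise by definition of $S_{\le}$, hence $\Pr[\mathcal M(D)\in S_{\le}]\le e^{\epsilon'}\Pr[\mathcal M(D')\in S_{\le}]\le e^{\epsilon'}\Pr[\mathcal M(D')\in S]$. It therefore suffices to show $\Pr[\mathcal M(D)\in S_{>}]\le \delta$, and since $S_{>}\subseteq\{x:L(x)>\epsilon'\}$, it is enough to bound the tail probability $\Pr_{x\sim\mathcal M(D)}[L(x)>\epsilon']$.

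The key step is to relate this tail probability to the R\'enyi divergence through a moment bound. Applying Markov's inequality to the nonnegative random variable $e^{(\alpha-1)L(x)}$ under $x\sim\mathcal M(D)$ gives $\Pr[L(x)>\epsilon']\le e^{-(\alpha-1)\epsilon'}\,\mathbb E_{x\sim\mathcal M(D)}\!\big[e^{(\alpha-1)L(x)}\big]$. A change of measure from $P$ to $Q$ then shows $\mathbb E_{x\sim\mathcal M(D)}\big[(P(x)/Q(x))^{\alpha-1}\big]=\mathbb E_{x\sim\mathcal M(D')}\big[(P(x)/Q(x))^{\alpha}\big]=e^{(\alpha-1)D_\alpha(\mathcal M(D)\|\mathcal M(D'))}\le e^{(\alpha-1)\epsilon}$, using the definition of $D_\alpha$ and the $(\alpha,\epsilon)$-RDP hypothesis. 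Combining the two displays yields $\Pr[L(x)>\epsilon']\le e^{(\alpha-1)(\epsilon-\epsilon')}=e^{-\log(1/\delta)}=\delta$, which completes the argument.

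The computation is essentially routine; the only delicate point is the measure-theoretic bookkeeping, in particular handling the set where $Q(x)=0$ (on which $L(x)=+\infty$, so it is automatically absorbed into $S_{>}$, and $\Pr[\mathcal M(D)\in S_{>}]\le\delta$ still follows from the moment bound once one checks the bound holds with $Q(x)=0$ treated as a limit), and justifying the change of measure. The monotonicity asserted in the statement, namely that any $\epsilon(\delta)\ge \epsilon+\tfrac{\log(1/\delta)}{\alpha-1}$ works, is then immediate: enlarging $\epsilon'$ only weakens the inequality $\Pr[\mathcal M(D)\in S]\le e^{\epsilon'}\Pr[\mathcal M(D')\in S]+\delta$. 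The same reasoning applies verbatim to discrete output spaces by replacing integrals with sums.
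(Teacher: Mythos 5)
Your argument is correct, and it is worth noting that the paper itself gives no proof of this proposition: it is imported verbatim from Mironov (2017), so there is no in-paper derivation to compare against. Your route is the standard ``tail bound'' conversion (the same argument that underlies the moments-accountant analysis of Abadi et al.): decompose $S$ by whether the privacy loss $L$ exceeds $\epsilon'=\epsilon+\tfrac{\log(1/\delta)}{\alpha-1}$, bound the bad event by Markov applied to $e^{(\alpha-1)L}$, and use the change of measure $\mathbb{E}_{P}\big[(P/Q)^{\alpha-1}\big]=\mathbb{E}_{Q}\big[(P/Q)^{\alpha}\big]=e^{(\alpha-1)D_\alpha(P\|Q)}\le e^{(\alpha-1)\epsilon}$, which indeed yields $\Pr_P[L>\epsilon']\le\delta$ and hence $\Pr[\mathcal M(D)\in S]\le e^{\epsilon'}\Pr[\mathcal M(D')\in S]+\delta$. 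Mironov's original proof is slightly different in mechanics: he derives a probability-preservation inequality $\Pr[\mathcal M(D)\in S]\le\big(e^{\epsilon}\Pr[\mathcal M(D')\in S]\big)^{(\alpha-1)/\alpha}$ via H\"older and then does a case split on the size of $\Pr[\mathcal M(D')\in S]$; both derivations give the same $(\epsilon+\log(1/\delta)/(\alpha-1),\delta)$ guarantee, and yours has the advantage of making the role of the privacy-loss random variable explicit, which connects naturally to how the paper tracks $\epsilon(\alpha)$ across iterations. Your handling of the edge cases is also adequate: finiteness of $D_\alpha$ under the RDP hypothesis forces $\mathcal M(D)$ to put no mass where the density of $\mathcal M(D')$ vanishes (otherwise the divergence is infinite), so the change of measure is legitimate, and the monotonicity claim for $\epsilon(\delta)\ge\epsilon+\log(1/\delta)/(\alpha-1)$ is immediate as you say.
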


Therefore, when evaluating our proposed algorithms, to compare with other algorithms which satisfies $(\epsilon, \delta)$-DP, we keep track of $(\alpha, \epsilon)$ pairs which our algorithm satisfies for a series of $\alpha$ values, then convert each pair into a $(\epsilon(\delta), \delta)$ pair it satisfies by Proposition 1, for a pre-defined small $\delta$, and choose the smallest $\epsilon(\delta)$ as the $(\epsilon, \delta)$-DP it satisfies to compare with other algorithms.

\subsection{Regularized Empirical Risk Minimization}
Many problems in machine learning can be formulated as
empirical risk minimization (ERM), which seek a
solution $x^*\in\Theta$ that minimizes an empirical loss on the
training data:
\begin{equation}
    x^* = \argmin_{x\in \Theta} F(x, D) := \argmin_{x\in \Theta} \frac{1}{n}\sum_{i=1}^n\ell(x, d_i)\,,
\end{equation}
where $\Theta$ is a parameter space, 
$\ell$ is a \emph{loss} function. To
prevent overfitting, it is common to add a (data-independent)
regularization term into the objective function, i.e. $\ell(x, d_i) = f(x,
d_i) + R(x)$. For $L_1$ regularization, $R(x) = \lambda \|x\|_1$. For
example, $L_1$ regularized logistic regression, one can fit the model
by solving 
\begin{equation}\label{formula_lr}
    x^* = \argmin_{x\in \Theta} \frac{1}{n}\sum_{i=1}^n \log(1+\exp(-l_ix^Ts_i)) + \lambda \|x\|_1
\end{equation}
Recall that each datum $d_i=(s_i, l_i)$ as feature vector $s_i$ and
label $l_i$. However, due to that many optimization algorithms assume
the loss function to be doubly differentiable, it cannot be directly
used on $L_1$ regularization problems. In this paper, we make the
following assumptions on the loss function: 
\begin{itemize}
    \item \textbf{Convexity} Both the data-dependent function $f$ and regularization term $R$ are convex.
    \item \textbf{Differentiability} The non-regularized data-dependent function $f$ is continuously differentiable with respect to $x$.
    \item \textbf{Bounded gradient} There exists a constant $C>0$ such that $\|\nabla f(x, d)\|_2\leq C$ for all $x\in \Theta$ and $d\in \mathcal D$. Usually it is satisfied by preprocessing the data to ensure the feature $s_i$ of each data $d_i$ lies inside a ball of some radius $r$, or directly clip the $L_2$ norm of individual gradient by a threshold $C$.
\end{itemize} 

\subsection{Alternating Direction Method of Multipliers}

The Alternating Direction Method of Multipliers (ADMM) algorithm was proposed decades ago, and has recently been widely used to solve optimization problems in machine learning \cite{boyd2011distributed}. Consider the optimization problem 
\begin{equation}
\begin{split}
    & \text{minimize} \quad f(x) + h(z) \\
    & \text{subject to} \quad Ax + Bz = c
\end{split}
\end{equation}
where $f:\mathbb R^n \rightarrow \mathbb R$, $g:\mathbb R^m\rightarrow \mathbb R$, $A\in\mathbb R^{p\times n}$, $B\in\mathbb R^{p\times m}$, and $c\in\mathbb R^p$. ADMM forms the augmented Lagrangian of the problem:
\begin{equation}
    L_\rho(x,z,y) := f(x) + h(z) + y^T(Ax+Bz-c) + \frac{\rho}{2} \|Ax+Bz-c\|^2_2
\end{equation}
where $x,z$ are called the \emph{primal} variables, $y\in \mathbb R^p$ is called the \emph{dual} variable, and $\rho>0$ is a pre-selected \emph{penalty} parameter.

ADMM algorithm solves the optimization problem by alternating the iterations below
\begin{align}
& \text{$x$-minimization step: } x^{k+1} \leftarrow \argmin_x L_\rho(x, z^k, y^k) \label{formula_x_min}\\
& \text{$z$-minimization step: } z^{k+1} \leftarrow \argmin_z L_\rho(x^{k+1}, z, y^k) \\
& \text{dual variable update: } y^{k+1} \leftarrow y^k + \rho (Ax^{k+1} + Bz^{k+1} - c)
\end{align}
Therefore, $x$ and $z$ are updated in an alternating fashion, and separating minimization over $x$ and $z$ into two steps can make the otherwise hard-to-solve optimization problem solvable in a sequential manner.

\subsection{Stochastic ADMM}

One variant of ADMM, stochastic ADMM (sADMM), was proposed by \cite{ouyang2013stochastic} and tested on $L_1$ regularized linear regression (LASSO). This variant was proposed based on the observation that, for ADMM problems, usually one of $f(x)$ and $h(z)$ is data-dependent, and it is both expensive and unnecessary to exactly solve its minimization step for each iteration. To be specific, let $f$ be data-dependent, and $h$ be data-independent, then the optimization problem becomes $f(x, D) + h(z)$, and sADMM approximate $L_\rho$ by \emph{approximated} augmented Lagrangian $\hat L_\rho$, defined at iteration $k$ as
\begin{equation}
\begin{split}
     \hat{L}_{\rho}(x, z, y) := f(x^k) + \langle \nabla f(x^k, B_k), x \rangle + \frac{\|x - x^k\|^2_2}{2\eta^k} \\
     + h(z) + y^T(Ax+Bz-c) + \frac{\rho}{2} \|Ax+Bz-c\|^2_2 
\end{split}
\end{equation}
where $B_k$ is a portion of the data accessed at iteration $k$, and $\eta^k$ is the learning rate at iteration $k$. After this approximation of $L_\rho$ by $\hat L_\rho$, one can derive an exact solution for each $x$-minimization step in (\ref{formula_x_min}), instead of solving a computationally expensive ERM problem. 

For $L_1$ regularized ERM, let $h(z)$ be the regularization term $R(z)=\lambda \|z\|_1$, the constraint $Ax+Bz=c$ reduces to $x=z$, then by taking derivative of $\hat L_\rho(x, z^k, y^k)$ and set to zero, one get
\begin{equation}\label{formula_xupdate}
    x^{k+1} \leftarrow \frac{1}{\rho+1/\eta^k}(-\nabla f(x, B_k) - y^k + \rho z^k + x^k / \eta^k)
\end{equation}
as the exact solution to minimize $\hat L_{\rho}(x, z^k, y^k)$, and
\begin{equation}\label{formula_yupdate}
    y^{k+1} \leftarrow y^k + \rho(x-z)
\end{equation}
to update the dual variable $y$.

\section{Algorithm}
\label{sec:algorithm}

In this section we propose the main algorithms. We propose two sADMM based $L_1$ regularized classification algorithms, both satisfies R\'enyi differential privacy. One achieves privacy by gradient perturbation relying on randomized subsampling; the other is through model perturbation after each epoch relying on sensitivity calculation. Both algorithms assume a centralized computing: all training data were collected in a center, which performs the computation locally. This is because we assume the data is small-to-median sized, where $L_1$ regularization are usually applied on.

\subsection{R\'enyi differentially private subsampling algorithm}

Our subsampling private sADMM algorithm (ssADMM) is presented in Algorithm 1. This algorithm is inspired by the gradient perturbation technique proposed in \cite{abadi2016deep}, on differentially private stochastic gradient descent (DP-SGD).

Similar as DP-SGD, our ssADMM algorithm perturbs the mini-batch gradient by Gaussian noise right after gradient evaluation in line 6. However, Algorithm 1 differs from DP-SGD for the following aspects: (i) By utilizing ADMM, we are able separate gradient descent and $L_1$ regularization into two steps, so that pure gradient can be computed and perturbed in $x$-minimization step; for DP-SGD, proximal gradient has to be used to handle $L_1$ regularization; (ii) while DP-SGD suggest using constant learning rate, we proved that using decreasing step size in Algorithm 1 help accelerate convergence, as in Theorem 2 and numerical experiments; (iii) authors of DP-SGD proposed the moment accountant (MA) method to analyze the privacy loss, and convert to $(\epsilon, \delta)$-DP; we use the most recent RDP for subsampling mechanism, which is a more advanced technique to analyze privacy loss, and also easier to implement. 

\begin{algorithm}[H]
\caption{RDP subsampling sADMM $L_1$ regularized ERM algorithm (ssADMM)}

\begin{algorithmic}[1]

\State \textbf{Input}: Dataset $D = \{d_1,...,d_n\}$. Penalty parameter $\rho$, mini-batch size $m$, total iterations $T$.

\State \textbf{Initialize}: primal variables $x^0, z^0$, dual variable $y^0$.

\For {iteration $k=0,1,...,T-1$}

\State Sample mini-batch $B_k$ from $D$ of size $m$.

\State $g_k \gets \frac{1}{m}\sum_{d_i \in B_k}\nabla f(x^k, d_i)$ \Comment{compute gradient}

\State $\tilde g_k \gets g_k + \gamma$ where $\gamma\sim N(0, \sigma^2\textbf{I}_p)$ \Comment{perturb gradient by Gaussian noise}

\State Compute $x^{k+1}$ by (\ref{formula_xupdate}) using $\tilde g_k$ \Comment{primal variable $x$}

\State Compute $z^{k+1}$ by (\ref{formula_zupdate}) \Comment{primal variable $z$}

\State Compute $y^{k+1}$ by (\ref{formula_yupdate}) \Comment{dual variable $y$}

\EndFor

\State \textbf{Output}: $x^T$

\end{algorithmic}
\label{alg_gp}

\end{algorithm}

Since the regularization is data-independent, it does not cause any privacy leak. Therefore, any (non-) smooth regularizers are applicable for Algorithm \ref{alg_gp}, with the same privacy guarantee. Since in this paper we use $L_1$ regularization as an example, for the $z$-minimization step, we utilize soft-thresholding technique from \cite{boyd2011distributed} to acquire the solution to minimize $L_\rho(x^{k+1}, z, y^k)$:
\begin{equation}\label{formula_zupdate} 
    z^{k+1} \leftarrow \mathcal S_{\frac{\lambda}{\rho}}(x^{k+1} + y^k/\rho)
\end{equation}
where soft-thresholding operator is defined as
\begin{equation}\label{formula_soft_threshold}
    \mathcal S_{t}(x)_i =
    \begin{cases}
    x_i - t & \text{ if } x_i > t \\
    x_i + t & \text{ if } x_i < -t \\
    0 & \text{ otherwise}
    \end{cases} 
\end{equation}
Similar technique has been used in \cite{ouyang2013stochastic} and \cite{wang2019differential}.

Another ADMM based algorithm proposed in \cite{huang2019dp} (DP-ADMM) also used gradient perturbation technique. Our method differed from theirs for the following aspects: (i) DP-ADMM is used for distributed learning, so that the training objective is assigned into multiple parties each holding a portion of the data, instead in ssADMM it is the data dependent loss and regularization that are separated; (ii) in DP-ADMM, each party is perturbing full gradient and transmit to the center, so that there is no privacy amplification effect, therefore although both algorithms solve optimization approximately, their privacy loss is higher than ours at each step. Our methods differ from the ADMM-objP method (DPLL in \cite{wang2019differential}) for the following aspect: (i) ADMM-objP perturb the training objective at each iteration, and use full gradient descent multiple times to acquire exact solution at each iteration, which is not as efficient as ours, since our method only access a portion of data once at each step; (ii) ADMM-objP guarantees privacy only if exact solution is acquired at each step, therefore the privacy guarantee is only theoretically true. The privacy guarantee of ssADMM is given by Theorem 1.

\begin{theorem}
Algorithm 1 is $(\alpha, \epsilon)$-RDP.
\end{theorem}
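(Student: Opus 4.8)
The plan is to reduce the privacy analysis of Algorithm~1 to three ingredients already in the preliminaries — a per-iteration $L_2$-sensitivity bound, the subsampled-RDP amplification lemma, and adaptive RDP composition — after observing that within each iteration the only fresh access to $D$ is the mini-batch gradient and everything else is post-processing of it.

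First I would pin down the single data-dependent operation in an iteration. By induction on $k$, the iterates $x^k,z^k,y^k$ entering iteration $k$ are functions of the previously released perturbed gradients $\tilde g_0,\dots,\tilde g_{k-1}$ alone, so the mini-batch mean $g_k=\frac{1}{m}\sum_{d_i\in B_k}\nabla f(x^k,d_i)$ is the only term that touches $D$ afresh. Under the replace-one neighboring relation and the bounded-gradient assumption $\|\nabla f(x,d)\|_2\le C$, changing one record in the subsample changes $g_k$ by at most $2C/m$ in $L_2$ norm (triangle inequality on the removed and inserted gradients), so $\Delta_2(g_k)\le 2C/m$ with respect to the subsampled dataset. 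The Gaussian-mechanism lemma then gives that releasing $\tilde g_k=g_k+\gamma$ with $\gamma\sim\mathcal N(0,\sigma^2 I_p)$ is $(\alpha,\epsilon_0(\alpha))$-RDP with respect to the subsample for every integer $\alpha>2$ (indeed every $\alpha>1$), with $\epsilon_0(\alpha)=\alpha\,\Delta_2^2(g_k)/(2\sigma^2)\le 2\alpha C^2/(m^2\sigma^2)$. Feeding $\epsilon_0(2),\dots,\epsilon_0(\alpha)$ into the subsampled-RDP lemma with sampling ratio $q=m/n$ upgrades this to $(\alpha,\epsilon_1(\alpha))$-RDP with respect to $D$ for the same $\alpha$. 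Since $x^{k+1}$ (computed from $\tilde g_k$ via (\ref{formula_xupdate})), $z^{k+1}$ (the soft-thresholding (\ref{formula_zupdate})), and $y^{k+1}$ (the update (\ref{formula_yupdate})) depend on $D$ only through $\tilde g_k$ and previously released quantities, the post-processing property of RDP keeps the whole iteration $(\alpha,\epsilon_1(\alpha))$-RDP. Finally, the $T$ iterations form an adaptively chosen sequence of such mechanisms, so the RDP composition lemma gives that Algorithm~1 — and hence the released $x^T$, by one last application of post-processing — is $(\alpha,\epsilon)$-RDP with $\epsilon=T\,\epsilon_1(\alpha)$.

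The genuinely routine parts are the $2C/m$ sensitivity bound and the substitution into the amplification formula. The point that needs care is the bookkeeping that legitimizes composition: one must verify the inductive claim that each iteration's fresh dependence on $D$ is confined to $g_k$, so that the run is honestly a composition of independent subsampled Gaussian mechanisms rather than something that reuses the data in a correlated way. I would also state explicitly the range of $\alpha$ for which the bound is asserted (integer $\alpha>2$, as required by the subsampled-RDP lemma, which also requires the per-step bound to hold for all integers in $\{2,\dots,\alpha\}$ — satisfied here by the Gaussian mechanism), and note that a concrete $(\alpha,\epsilon)$ pair is obtained simply by evaluating $T\,\epsilon_1(\alpha)$ at the chosen $\alpha$, exactly as described after Proposition~1.
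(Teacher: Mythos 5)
Your proposal is correct and follows essentially the same route as the paper's proof: bound the per-batch $L_2$ sensitivity of $g_k$ by $2C/m$, apply the Gaussian mechanism lemma to get per-iteration RDP with respect to $B_k$, amplify via the subsampled-RDP lemma to get a guarantee with respect to $D$, and sum over the $T$ iterations by RDP composition. Your additional remarks on post-processing, adaptivity, and the admissible range of $\alpha$ only make explicit bookkeeping that the paper leaves implicit.
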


\begin{proof}
We first show the $L_2$ sensitivity of batch gradient $g_k$. Assume neighboring mini-batches $B_i$ and $B'_i$ differ by one record $d_s\in B$ and $d_s\in B'$, by Definition 4,
\begin{equation}
\begin{split}
    \Delta_2^k(g) = & \Delta_2[\frac{1}{m}\sum_{d_i\in B_k}\nabla f(x^k, d_i)] \\
    = & \sup_{B_k\sim B'_k}\|\frac{1}{m}\sum_{d_i\in B_k}\nabla f(x^k, d_i) - \frac{1}{m}\sum_{d_i\in B'_k}\nabla f(x^k, d_i)\|_2\\
    = & \frac{1}{m}\sup\|\nabla f(x^k, d_s) - \nabla f(x^k, d'_s)\|_2 \leq \frac{2C}{m}
\end{split}
\end{equation}

Let $\epsilon_k(\alpha) = \alpha(\Delta_2^k(g))^2 / 2\sigma^2$. So each iteration is $(\alpha, \epsilon_k(\alpha))$-RDP by Lemma 1, with respect to the batch $B_k$. Since $B_k$ is a randomized subsample of $D$, by Lemma 2, we can calculate $\epsilon'_k(\alpha)$ so that each iteration is $(\alpha, \epsilon'_k(\alpha))$-RDP with respect to $D$. Since the algorithm has run $T$ iterations, let $\epsilon = \sum_{k=0}^{T-1}\epsilon'_k(\alpha)$, by Lemma 4, Algorithm 1 is $(\alpha, \epsilon)$-RDP. 
\end{proof}

\begin{theorem}
If we choose $\eta^k = O(1/\sqrt{k})$, and train for $t$ iterations, then Algorithm 1 has the expected convergence rate of $O(1/\sqrt{t})$.
\end{theorem}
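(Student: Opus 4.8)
The plan is to adapt the ergodic-average convergence analysis of stochastic ADMM from \cite{ouyang2013stochastic} to the noisy gradient oracle introduced in line~6 of Algorithm~\ref{alg_gp}. Write $\nabla f(x^k)=\frac1n\sum_{i=1}^n\nabla f(x^k,d_i)$ for the full-batch gradient and set $\delta_k:=\tilde g_k-\nabla f(x^k)$, which bundles together the mini-batch sampling error and the Gaussian perturbation $\gamma\sim\mathcal N(0,\sigma^2\mathbf I_p)$. Since $B_k$ is sampled uniformly and $\gamma$ is zero-mean and independent of the history $\mathcal F_k$ of all randomness generated before iteration~$k$, we have $\mathbb E[\delta_k\mid\mathcal F_k]=0$; and the bounded-gradient assumption $\|\nabla f(\cdot,d)\|_2\le C$ together with $\mathbb E\|\gamma\|_2^2=\sigma^2 p$ gives $\mathbb E[\|\delta_k\|_2^2\mid\mathcal F_k]\le 4C^2+\sigma^2 p=:D^2$. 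Thus the perturbed update~(\ref{formula_xupdate}) is exactly a stochastic-approximation ADMM step with an unbiased gradient oracle of bounded second moment, and the privacy noise merely inflates the constant $D^2$.

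First I would write down the one-step inequality produced by the optimality conditions of the two subproblems. Specializing the ADMM constraint to $x=z$ (so $A=I$, $B=-I$, $c=0$), stationarity of $\hat L_\rho(\cdot,z^k,y^k)$ at $x^{k+1}$ reads $\tilde g_k+\tfrac{1}{\eta^k}(x^{k+1}-x^k)+y^k+\rho(x^{k+1}-z^k)=0$, so by the three-point identity $\langle a-b,\,a-c\rangle=\tfrac12(\|a-c\|_2^2-\|b-c\|_2^2+\|a-b\|_2^2)$ one obtains, for every $u\in\mathbb R^p$,
\[
\langle\tilde g_k+y^k+\rho(x^{k+1}-z^k),\,x^{k+1}-u\rangle=\tfrac{1}{2\eta^k}\big(\|u-x^k\|_2^2-\|u-x^{k+1}\|_2^2-\|x^{k+1}-x^k\|_2^2\big),
\]
while the $z$-step~(\ref{formula_zupdate}), which by soft-thresholding exactly minimizes $L_\rho(x^{k+1},\cdot,y^k)$ over $z$, contributes through convexity of $h=\lambda\|\cdot\|_1$ the term $h(z^{k+1})-h(u)$ together with dual-coupling terms, and the dual update~(\ref{formula_yupdate}) contributes a $\tfrac{1}{2\rho}$-telescoping in the dual iterates. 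Combining these with convexity of $f$ written as $f(x^k)-f(u)\le\langle\nabla f(x^k),x^k-u\rangle$, substituting $\nabla f(x^k)=\tilde g_k-\delta_k$, and splitting $\langle\tilde g_k,x^k-u\rangle=\langle\tilde g_k,x^{k+1}-u\rangle+\langle\tilde g_k,x^k-x^{k+1}\rangle$ with the last inner product absorbed into the $-\tfrac{1}{2\eta^k}\|x^{k+1}-x^k\|_2^2$ slack at the cost of an $O(\eta^k)(C^2+\|\gamma\|_2^2)$ term, I would arrive at an inequality of the schematic form
\[
F(x^k)-F(u)+\big(\text{dual-gap terms}\big)\le\big(\text{telescoping in }\tfrac1{\eta^k}\|u-x^k\|_2^2,\ \|u-z^k\|_2^2,\ \tfrac1\rho\|\cdot\|_2^2\big)-\langle\delta_k,x^k-u\rangle+c_0\,\eta^k(C^2+\sigma^2 p).
\]

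Then I would sum over $k=0,\dots,t-1$, take expectations so that the martingale term $\mathbb E\langle\delta_k,x^k-u\rangle=0$ drops out, and instantiate $u=x^\ast$ with $y$ set to a bounded optimal dual multiplier. Because $\eta^k$ is decreasing rather than constant, the telescoping of the $\tfrac{1}{2\eta^k}\|x^\ast-x^k\|_2^2$ terms leaves the residual $\sum_k\big(\tfrac{1}{2\eta^k}-\tfrac{1}{2\eta^{k-1}}\big)\|x^\ast-x^k\|_2^2$; using monotonicity of $1/\eta^k$ and the boundedness of the iterates inside a ball of diameter $\Omega$ (which in the $x=z$ case follows from non-expansiveness of the prox step, or can simply be enforced by a projection onto $\Theta$), this residual is at most $\Omega^2/(2\eta^{t})$. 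With $\eta^k=\Theta(1/\sqrt k)$ we have $\Omega^2/(2\eta^{t})=\Theta(\sqrt t)$ and $\sum_{k=0}^{t-1}\eta^k(C^2+\sigma^2 p)=\Theta(\sqrt t)$, so dividing by $t$ and applying Jensen's inequality to the ergodic averages $\bar x_t:=\frac1t\sum_{k=0}^{t-1}x^k$ and $\bar z_t$ yields $\mathbb E[F(\bar x_t)]-F(x^\ast)=O(1/\sqrt t)$, together with an $O(1/\sqrt t)$ bound on the expected primal residual $\mathbb E\|\bar x_t-\bar z_t\|_2$.

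I expect the main obstacle to be the dual-variable bookkeeping: one must choose the comparison multiplier correctly and match the cross terms $(y^k)^\top(x^{k+1}-z^{k+1})$ against both the $\tfrac{1}{2\rho}$-telescoping coming from~(\ref{formula_yupdate}) and the residuals of the $z$-update, so that each such term is either telescoping or $O(\eta^k)$, and then verify that the negative dual-gap contribution on the left-hand side does not weaken the final rate. A secondary point worth making explicit is that Algorithm~\ref{alg_gp} returns the last iterate $x^T$, whereas the clean $O(1/\sqrt t)$ guarantee is for the time average; the theorem should therefore be read as a statement about $\bar x_t$ (or one argues separately that the last iterate behaves comparably), and the boundedness-of-iterates hypothesis underlying the variable-step-size telescoping should be stated as a standing assumption.
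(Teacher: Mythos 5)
Your plan is essentially the paper's own proof: the appendix likewise adapts the ergodic-average analysis of \cite{ouyang2013stochastic} (its Lemma~2 and the bounded dual ball $\mathcal B_0$ giving the $\beta\|\overline{x}_t-\overline{z}_t\|$ term), treats the sampling error and Gaussian noise as zero-mean terms that vanish in expectation while inflating the gradient second moment to $C^2+p\sigma^2$, and uses the bounded-domain constant $d_{\mathcal X}$ with $\eta^k=O(1/\sqrt{k})$ to get the $O(1/\sqrt{t})$ bound on the averaged iterates. Your closing caveat is also accurate but not addressed in the paper: the guarantee proved there is for the ergodic average $\overline{u}^t$, while Algorithm~1 outputs the last iterate $x^T$.
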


\begin{proof}
See proof in appendix.
\end{proof}

\subsection{R\'enyi differentially private model perturbation algorithm}

Our model perturbation private sADMM algorithm (mpADMM) is presented in Algorithm 2. Different from perturbing the gradients, this algorithm use the unperturbed gradients to do model calculation for a whole step, and keep track of the $L_2$ sensitivity of all data-dependent model vectors. After each epoch, Gaussian noises are injected into model vectors $x, y, z$, and total privacy $\epsilon$ is updated, according to sensitivity and $\sigma^2$. Due to it is difficult to calculate the sensitivity over multiple epochs, we perform output perturbation after each epoch. Therefore, this algorithm can be considered as multiple-time output perturbation algorithm.

\begin{algorithm}[H]
\caption{RDP model perturbation sADMM $L_1$ regularized ERM algorithm (mpADMM)}

\begin{algorithmic}[1]

\State \textbf{Input}: Dataset $D = \{d_1,...,d_n\}$. Penalty parameter $\rho$, total epochs $T$.

\State \textbf{Initialize}: primal variables $x^0, z^0$, dual variable $y^0$.

\For {epoch $k=0,1,...,T-1$}

\State $g_k \gets \frac{1}{n}\sum_{d_i \in D}\nabla f(x^k, d_i)$ \Comment{compute gradient}

\State Compute $x^{k+1}$ by (\ref{formula_xupdate}) \Comment{primal variable $x$}

\State Compute $z^{k+1}$ by (\ref{formula_zupdate}) \Comment{primal variable $z$}

\State Compute $y^{k+1}$ by (\ref{formula_yupdate}) \Comment{dual variable $y$}

\State Sample $\gamma_1, \gamma_2, \gamma_3 \sim N(0, \sigma^2\textbf{I}_p)$

\State $x^{k+1} \gets x^{k+1} + \gamma_1$, $y^{k+1} = y^{k+1} + \gamma_2$, $z^{k+1} = z^{k+1} + \gamma_3$ \Comment{perturb the model}

\EndFor

\State \textbf{Output}: $x^T$

\end{algorithmic}

\end{algorithm}

To calculate the sensitivity, since unperturbed batch gradient is used here, after one epoch, all primal and dual variables are data-dependent. Assume neighboring datasets $D$ and $D'$ differ at position $s$: $d_s\in D$ and $d'_s\in D'$. We define $\delta_x:=x - (x')$ where $x$ and $(x')$ are primal variables evaluated on $D$ and $D'$, respectively, after one epoch. Also, define $\delta_z^k$ and $\delta_y^k$ similarly. Then, after epoch $k$,
\begin{equation}
\begin{split}
    \delta_x^{k+1} = & x^{k+1} - (x')^{k+1} \\
    = & \frac{1}{\rho+1/\eta^k}(-\frac{1}{n}\sum_{d_i \in D}\nabla f(x^k, d_i) - y^k + \rho z^k + x^k / \eta^k) - \\
    & \frac{1}{\rho+1/\eta^k}(-\frac{1}{n}\sum_{d_i \in D'}\nabla f(x^k, d_i) - y^k + \rho z^k + x^k / \eta^k) \\
    = & (\nabla f(x^k, d'_s) -\nabla f(x^k, d_s)) / n(1 + \eta^{k+1}\rho)
\end{split}
\end{equation}
Consider when the soft-thresholding operator $\mathcal S_t$ (\ref{formula_soft_threshold}) applied on two vectors $w$ and $w'$, and compare $\mathcal S_t(w) - \mathcal S_t(w')$ with $w - w'$ element-wise:
\begin{itemize}
    \item If $w_i$ and $w'_i$ are of different signs, applying $\mathcal S$ on $w_i$ and $w'_i$ would bring them closer, therefore $|\mathcal S_t(w_i) - \mathcal S_t(w'_i) | < |w_i - w'_i|$;
    \item If $w_i$ and $w'_i$ are of the same sign, without loss of generality, let $|w_i| \leq |w'_i|$. One can easily observe that
    \begin{itemize}
        \item If $t\leq |w_i| \leq |w'_i|$, then $|\mathcal S_t(w_i) - \mathcal S_t(w'_i) | = |(|w_i|-t) - (|w'_i|-t)| = |w_i - w'_i|$;
        \item If $|w_i| < t < |w'_i|$, then $|\mathcal S_t(w_i) - \mathcal S_t(w'_i) | = |0 - (|w'_i|-t)| < |w_i - w'_i|$ since $t < |w'_i|$;
        \item If $|w_i| \leq |w'_i| \leq t$, then $|\mathcal S_t(w_i) - \mathcal S_t(w'_i) | = 0 \leq |w_i - w'_i|$;
    \end{itemize}
\end{itemize}
For vectors $u, v$, we can use $u \preccurlyeq v$ to represent $|u_i| < |v_i|$ and $u_i, v_i$ have the same sign, for each index $i$. Obviously $u\preccurlyeq v$ indicates $\|u\|_2 \leq \|v\|_2$. In either case above, we have $|\mathcal S_t(w_i) - \mathcal S_t(w'_i)| \leq |w_i - w'_i|$, and sign preserves (or becomes zero), so $\mathcal S_t(w) - \mathcal S_t(w') \preccurlyeq w - w'$ for any threshold $t$. Therefore,
\begin{equation}
\begin{split}
    \delta_z^{k+1} = & z^{k+1} - (z')^{k+1} \\
    = & \mathcal S_{\frac{\lambda}{\rho}}(x^{k+1} + y^k / \rho) - \mathcal S_{\frac{\lambda}{\rho}}((x')^{k+1} + y^k / \rho) \\
    \preccurlyeq & x^{k+1} + y^k/\rho - ((x')^{k+1} + y^k/\rho) = \delta_x^{k+1}
\end{split}
\end{equation}
and
\begin{equation}
    \begin{split}
    \delta_y^{k+1} & = y^{k+1} - (y')^{k+1} \\
    & = y^k + \rho(x^{k+1} - z^{k+1}) - \big(y^k + \rho((x')^{k+1} - (z')^{k+1})\big) \\
    & =\rho(\delta_x^{k+1} - \delta_z^{k+1}) \preccurlyeq \rho\delta_x^{k+1}
\end{split}
\end{equation}
The last $\preccurlyeq$ holds because $\delta_z^{k+1} \preccurlyeq \delta_x^{k+1}$, the subtraction by $\delta_z^{k+1}$ only pushes each element of $\delta_x^{k+1}$ towards zero. So we have below conclusions for sensitivities of $x, z, y$ after epoch $k$:
\begin{equation}
    \Delta_2^{k+1}(x) = \|\delta_x^{k+1}\|_2 \leq \frac{2C}{n(1+\eta^{k+1}\rho)}
\end{equation}
\begin{equation}
    \Delta_2^{k+1}(z) = \|\delta_z^{k+1}\|_2 \leq \|\delta_x^{k+1}\|_2 \leq \frac{2C}{n(1+\eta^{k+1}\rho)}
\end{equation}
\begin{equation}
    \Delta_2^{k+1}(y) = \|\delta_y^{k+1}\|_2 \leq \rho\|\delta_x^{k+1}\|_2 \leq \frac{2\rho C}{n(1+\eta^{k+1}\rho)}
\end{equation}

\begin{theorem}
Algorithm 2 is $(\alpha, \epsilon)$-RDP.
\end{theorem}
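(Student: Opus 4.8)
The plan is to view the whole execution of Algorithm 2 as an adaptive $T$-fold composition of Gaussian mechanisms — one Gaussian release per epoch — and then invoke the RDP composition rule (Lemma 4) together with the per-epoch sensitivity bounds already derived above. Since the algorithm only outputs $x^T$, and $x^T$ is a data-independent post-processing of the full sequence of perturbed iterates $\{(x^{k+1}, y^{k+1}, z^{k+1})\}_{k=0}^{T-1}$, it suffices to show that releasing this whole sequence is $(\alpha,\epsilon)$-RDP; RDP is closed under post-processing.

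First I would fix a pair of neighboring datasets $D \sim D'$ differing at position $s$ and set up the adaptive structure. At the start of epoch $k$, the iterates $x^k, y^k, z^k$ are either the fixed initialization ($k=0$) or the already-released perturbed outputs of epoch $k-1$ ($k\ge 1$); in the composition argument we condition on this prefix being identical under $D$ and $D'$. Under that conditioning, $\delta_x^{k+1}$, $\delta_z^{k+1}$, $\delta_y^{k+1}$ are precisely the one-epoch differences computed above, so the $L_2$ sensitivities of the epoch-$k$ updates of $x$, $z$, $y$ are bounded by $\Delta_2^{k+1}(x),\Delta_2^{k+1}(z) \le \tfrac{2C}{n(1+\eta^{k+1}\rho)}$ and $\Delta_2^{k+1}(y) \le \tfrac{2\rho C}{n(1+\eta^{k+1}\rho)}$, exactly as established.

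Next, for a single epoch $k$, the perturbations $\gamma_1,\gamma_2,\gamma_3 \sim \mathcal N(0,\sigma^2\mathbf I_p)$ are independent, so releasing $(x^{k+1}+\gamma_1,\,y^{k+1}+\gamma_2,\,z^{k+1}+\gamma_3)$ is exactly the Gaussian mechanism applied to the concatenated vector in $\mathbb R^{3p}$ with noise $\mathcal N(0,\sigma^2\mathbf I_{3p})$, whose $L_2$ sensitivity is
\[
\Delta_2^{k+1} = \sqrt{(\Delta_2^{k+1}(x))^2 + (\Delta_2^{k+1}(y))^2 + (\Delta_2^{k+1}(z))^2} \le \frac{2C\sqrt{2+\rho^2}}{n(1+\eta^{k+1}\rho)}.
\]
By Lemma 1, epoch $k$ is $(\alpha,\epsilon_k(\alpha))$-RDP (conditional on the prefix) with $\epsilon_k(\alpha) = \alpha(\Delta_2^{k+1})^2/(2\sigma^2) \le 2\alpha C^2(2+\rho^2)/(\sigma^2 n^2(1+\eta^{k+1}\rho)^2)$; equivalently one may compose the three single-variable Gaussian mechanisms, which yields the same bound. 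Summing over the $T$ epochs via the (adaptive) RDP composition rule of Lemma 4 then gives that Algorithm 2 is $(\alpha,\epsilon)$-RDP with $\epsilon = \sum_{k=0}^{T-1}\epsilon_k(\alpha)$, and one chooses $\sigma^2$ so that this matches the target $\epsilon$. No subsampling amplification lemma is needed here, since mpADMM processes the full dataset, so the argument is valid for every real $\alpha>1$.

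The main obstacle — and essentially the only step requiring care — is the adaptive conditioning: one must argue that the RDP composition theorem (stated in Lemma 4 for a pair of mechanisms on the same data) extends to this adaptive chain, and, crucially, that conditioning on an identical prefix is legitimate because that prefix has already been made public by earlier epochs. This is precisely what lets the multi-epoch sensitivity — which, as the paper notes, is hard to bound directly — collapse to the per-epoch bounds; once that is granted, the remainder is a routine application of the Gaussian mechanism and the triangle-inequality and soft-thresholding sensitivity estimates already in hand.
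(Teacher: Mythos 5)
Your proposal is correct and follows essentially the same route as the paper: bound the per-epoch $L_2$ sensitivities of $x$, $z$, $y$, apply the Gaussian mechanism (Lemma 1) per epoch, and sum the per-epoch R\'enyi losses over the $T$ epochs via Lemma 4 — your concatenated $3p$-dimensional Gaussian view gives exactly the same $\epsilon$ as the paper's sum of three per-variable terms. Your explicit handling of the adaptive composition (conditioning on the already-released noisy prefix) and the post-processing remark are points the paper leaves implicit, but they do not change the argument.
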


\begin{proof}
Let $\epsilon_{k+1, w}(\alpha) = \alpha(\Delta_2^{k+1}(w))^2 / 2\sigma^2$ for $w\in\{x, z, y\}$. By Lemma 1, each epoch is $(\alpha, \sum_{w\in\{x,z,y\}}\epsilon_{k+1, w}(\alpha))$-RDP, with respect to $D$. Since the algorithm has run $T$ epochs, by Lemma 4, let $\epsilon = \sum_{k=1}^{T}\sum_{w\in\{x,z,y\}}\epsilon_{k, w}(\alpha))$, then Algorithm 2 is $(\alpha, \epsilon)$-RDP.
\end{proof}

\section{Experimental Results}
\label{sec:experiment}

In this section we will present our experimental results on both real and simulated datasets. We will first show performance of classification on two real datasets, then show performance of both classification and feature selection on a synthetic dataset.

\subsection{ERM models}
We perform our experiments on $L_1$ regularized logistic regression and huberized SVM. The objective function of logistic regression is in (\ref{formula_lr}). For huberized SVM, the objection function is
\begin{equation} \label{formula_svm}
    F(x, D) := \frac{1}{n}\sum_{i=1}^n \ell_{\text{huber}}(l_ix^Ts_i) + \lambda \|x\|_1
\end{equation}
where
\begin{equation}
    \ell_\text{huber}(z) :=
    \begin{cases}
    0 & \text{ if } z > 1 + h \\
    \frac{1}{4h}(1+h-z)^2 & \text{ if } |1-z| \leq h \\
    1-z & \text{ otherwise}
    \end{cases} 
\end{equation}
is the huberized hinge loss (we set $h=0.5$ in all experiments).

\subsection{Baselines}
Many differentially private ERM algorithms cannot be applied to $L_1$ regularized classification, such as ObjPert \cite{chaudhuri2011differentially}, \cite{kifer2012private}, OutPert \cite{zhang2016dynamic}, PVP and DVP \cite{zhang2017efficient}, PSGD \cite{wu2017bolt}, and RSGD \cite{chen2019renyi}. 
Therefore, we compare our proposed algorithms with these baselines: DP-SGD \cite{abadi2016deep}, DP-ADMM  \cite{wang2019differential}, ADMM-objP \cite{huang2019dp}, and Non-Private approach. 

DP-SGD performs stochastic gradient descent with Gaussian perturbation. (Although their paper proposed moment accountant approach to analyze the privacy leak, we use Lemma 2 to analyze as we do on ssADMM, since it gives tighter bound on $\epsilon$.)
For DP-SGD, when the algorithm requires taking gradient on $f(x^k, B_k) + \lambda\|x^k\|_1$, we use the proximal gradient technique 
\begin{equation}
x^{k+1}\gets \mathcal S_{\lambda\eta^k}[x^k - \eta^k \nabla f(x^k, B_k)]
\end{equation}
to update $x^{k+1}$, as suggested in \cite{duchi2009efficient} and \cite{combettes2011proximal}. 
DP-ADMM is a distributed learning version of ADMM, where each party transfers perturbed primal variables to the center, and the center draw a consensus of the parties then transfer primal and dual variable back to each party.
ADMM-objP is an ADMM version of the objective perturbation algorithm. At each iteration, the trainer optimize a perturbed unregulated objective function, therefore although the algorithm satisfies pure $\epsilon$-DP, in practice it is not really differentially private due to the objective function can only be approximately solved. According to their paper, we apply gradient descent enough times and assume the optimization problem is exactly solved at each iteration. 

The DP-SVRG algorithm presented in \cite{wang2017differentially} can also be applied on non-smooth regularizers, but we have implemented and found that, due to the extra privacy budget required to spent on perturbing the full gradient, with the high privacy range $(\epsilon\leq 1)$, if we choose a large $\sigma^2$, the perturbed full gradient cannot help as a control variant to fasten the training, but actually slows down the minimization of empirical loss; if we choose a small $\sigma^2$, the privacy budget accumulates too fast and exceed our range in a few iterations. Therefore we have dropped this algorithm in our comparisons.

\subsection{Datasets and Pre-proessing}

Two real datasets on human subjects were used in our study: (i) the Adult dataset \cite{chang2011libsvm} was generated from 1994 US Census, with $n=48,842$, $p=124$, and the frequency of the majority label is 0.761; (ii) the IPUMS-BR dataset \cite{ruggles2015integrated} was extracted from IPUMS data, with $n=38,000$, $p=53$, and the frequency of the majority label is 0.507. 

To test the performance on feature selection, we created a synthetic dataset with many irrelevant features, using similar strategy as in \cite{wang2019differential}. To be specific, we generate a 100-dimension data $s_i \sim \mathcal N(0_{100}, \Sigma)$ where $\Sigma_{i,j}=0.5^{|i-j|}$. Let $x$ be the true model, defined as $x_{1:10} = (0.5, 1, 1.5, 2, 2.5, 3, 3.5, 4, 4.5, 5)$, $x_{11:20}=-x_{1:10}$, and $x_{21:100}=(0,...,0)$. For the label of each row $l_i$, we sample the Bernoulli distribution with $P(l_i=1)=1 / (1 + \exp(-x^Ts_i + \iota))$, where $\iota\sim \mathcal N(0, 1)$ is a random noise. Therefore, to predict $l_i$, $s_i$ contains 20 relevant features and 80 irrelevant features. We generate 40,000 samples to constitute one dataset, the frequency of the majority label is 0.500. We only perform logistic regression on simulated data, since it is usually used for attribute selection.

We did 10-fold cross validation on each experiment for each algorithm, and due to randomness from noisy injection, we repeat each fold 10 times and report average classification accuracy and objective value on testing data. For the simulated data, we generated 10 datasets using the simulation strategy, and report the average performance.

An intercept is added into each dataset. All numerical attributes are re-scaled into [0, 1] by Min-Max scalar. For the algorithms requiring feature vector to have bounded $L_2$ norm, we normalize to make $\|x_i\|\leq 1$ for $i=1,...,n$.

\subsection{Parameter setting}

We keep $\delta=10^{-8}$ for all experiments. For those algorithms satisfying RDP, we choose the best conversion to $(\epsilon, \delta)$-DP. In non-private settings, model users usually train a series models with different candidates of regularization coefficient $\lambda$, and select the one with highest testing performance. However, this process is data-dependent, therefore in private settings we cannot take a ``best performing'' coefficient for granted. Instead, we performed two group of experiments by two frequently using coefficients: low regularization with $\lambda=0.0001$ and high regularization with $\lambda=0.001$.

For ssADMM and DP-SGD, we set mini-batch size $m=\sqrt{n}$. We choose $\eta^k = \eta^0 / h$ where $h$ is the current expected epoch (we consider every $n/m$ iterations as one expected epoch), since we find this schedule has the best performance for both algorithms, compare to a constant learning rate, or a decreasing one at a rate of $O(1/\sqrt{k})$. After tuning on the simulated data, we set penalty term $\rho=0.25$ for ssADMM and $\rho=0.5$ for mpADMM. For mpADMM, we use a constant learning rate $\eta$.
For DP-ADMM, we assume there are 2 parties, each holding half of the data. (If there is only one party, DP-ADMM will reduce to DP-SGD with sampling ratio=1.) 
For ADMM-objP, at each iteration we optimize the perturbed objective function by full gradient descent running 20 epochs. Other parameters for DP-ADMM and ADMM-objP are set according to their paper.

\subsection{Classification Performance on Real Data}

\begin{figure*}[ht]
    \centering
    \begin{subfigure}[b]{0.245\textwidth}
        \includegraphics[width=\textwidth]{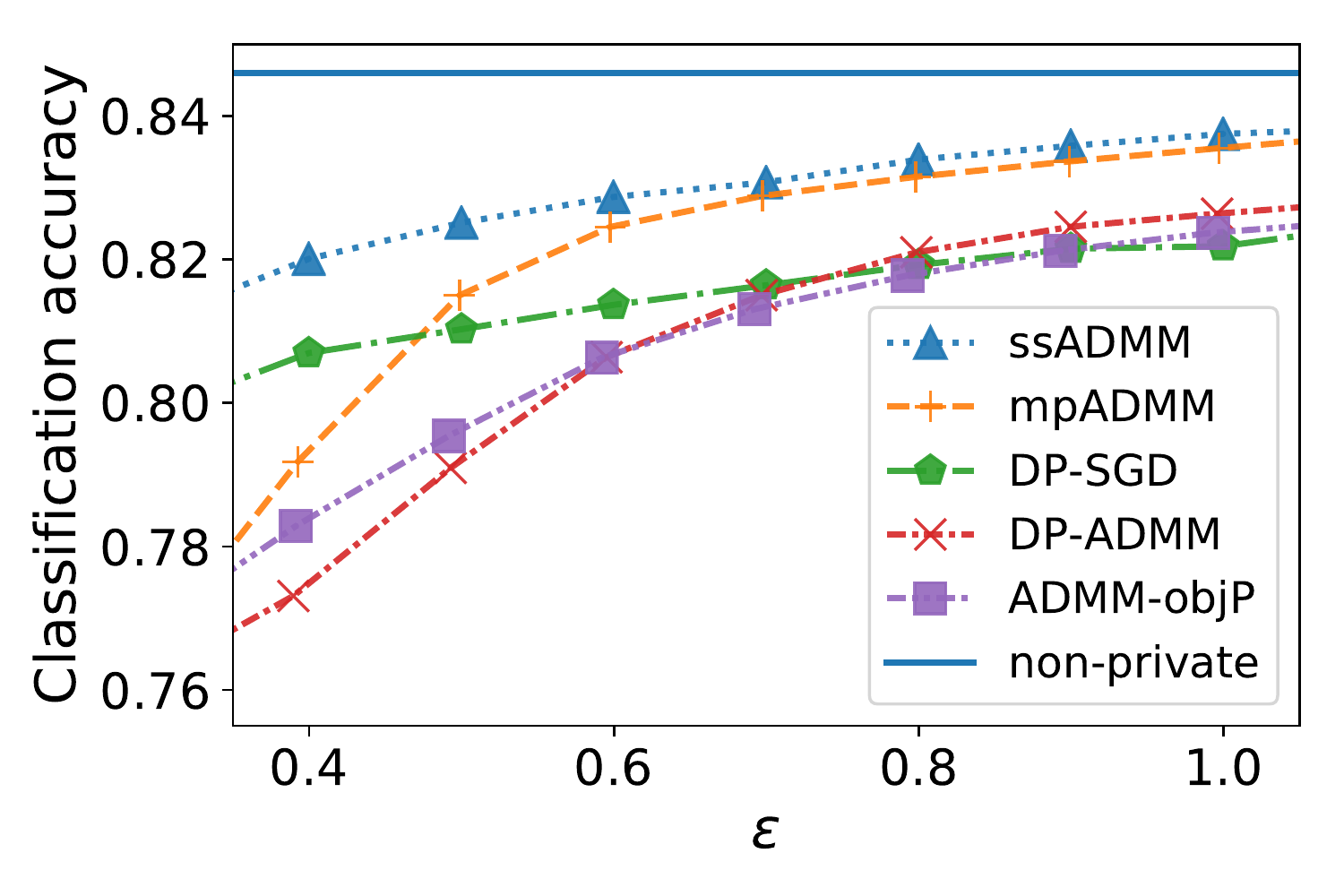}
        \includegraphics[width=\textwidth]{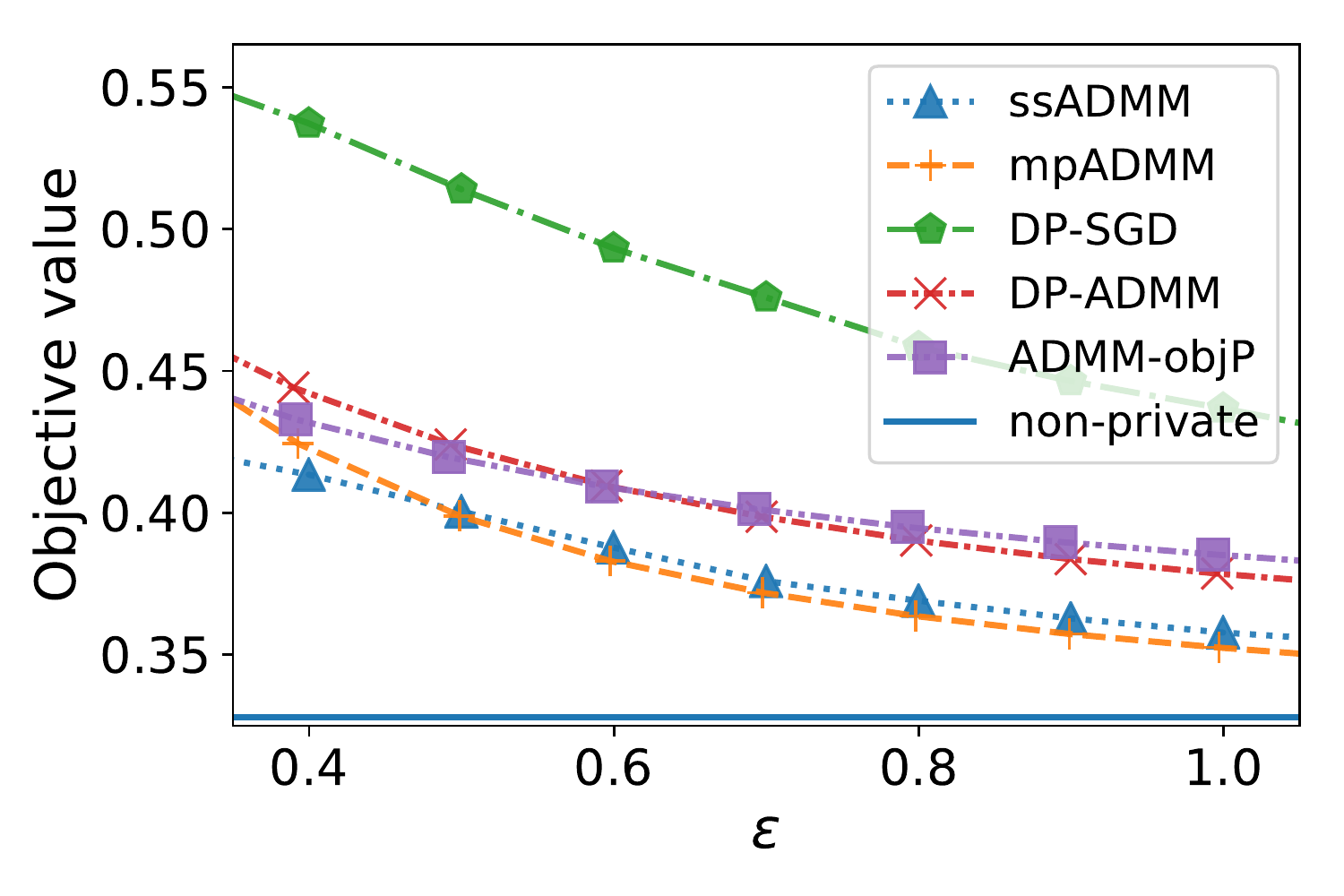}
        \caption{Adult $\lambda=0.0001$}
    \end{subfigure}
    \begin{subfigure}[b]{0.245\textwidth}
        \includegraphics[width=\textwidth]{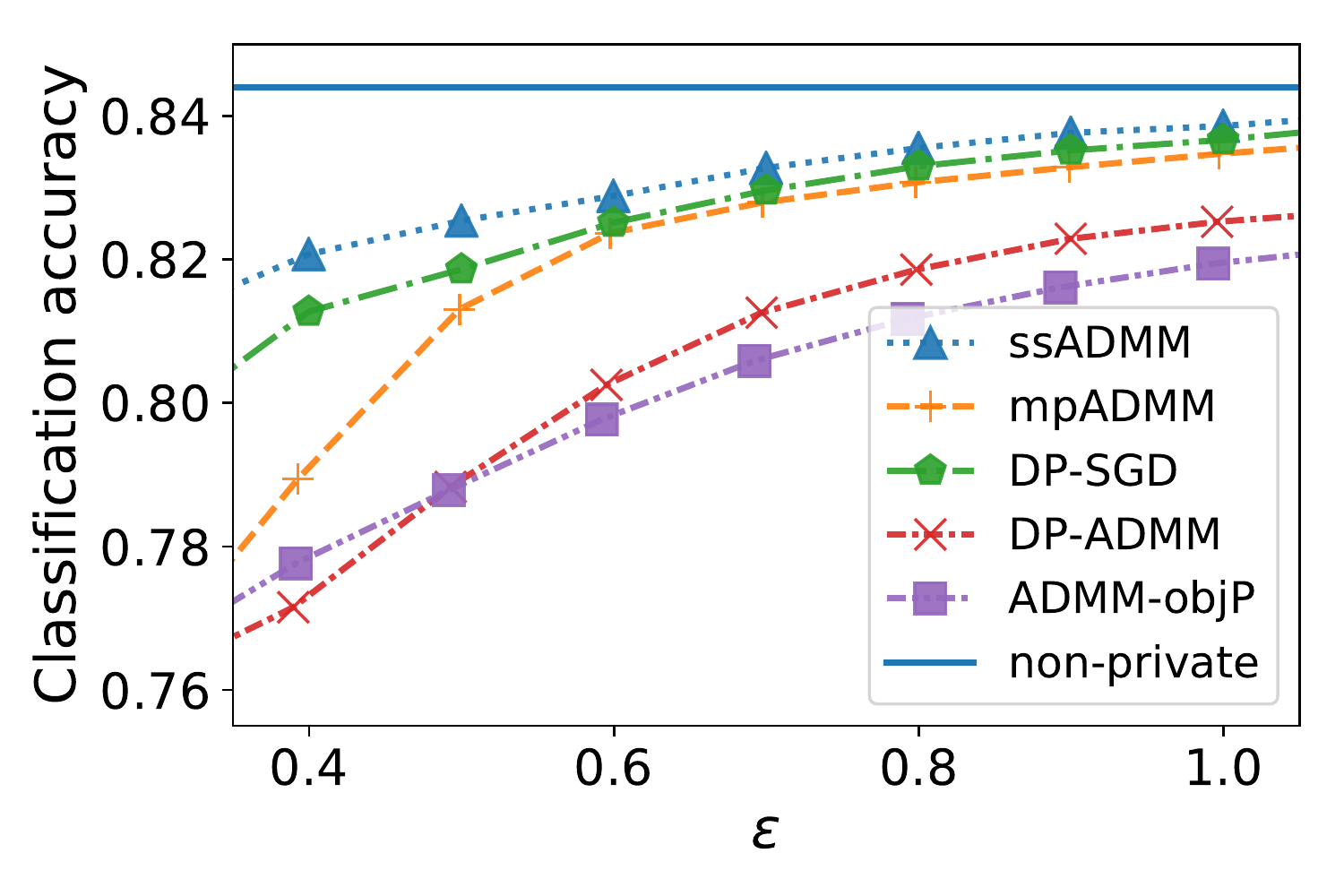}
        \includegraphics[width=\textwidth]{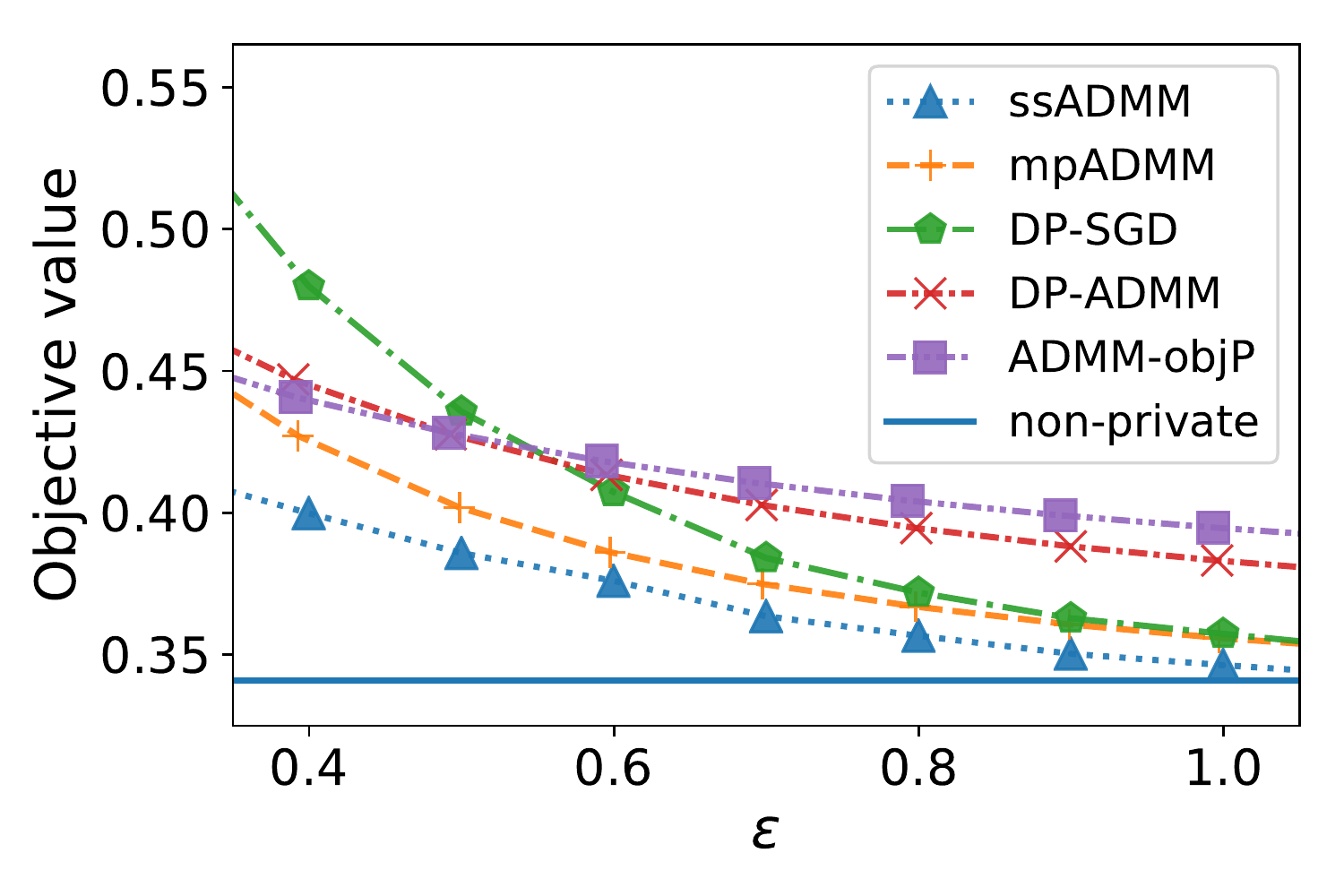}
        \caption{Adult $\lambda=0.001$}
    \end{subfigure}
    \begin{subfigure}[b]{0.245\textwidth}
        \includegraphics[width=\textwidth]{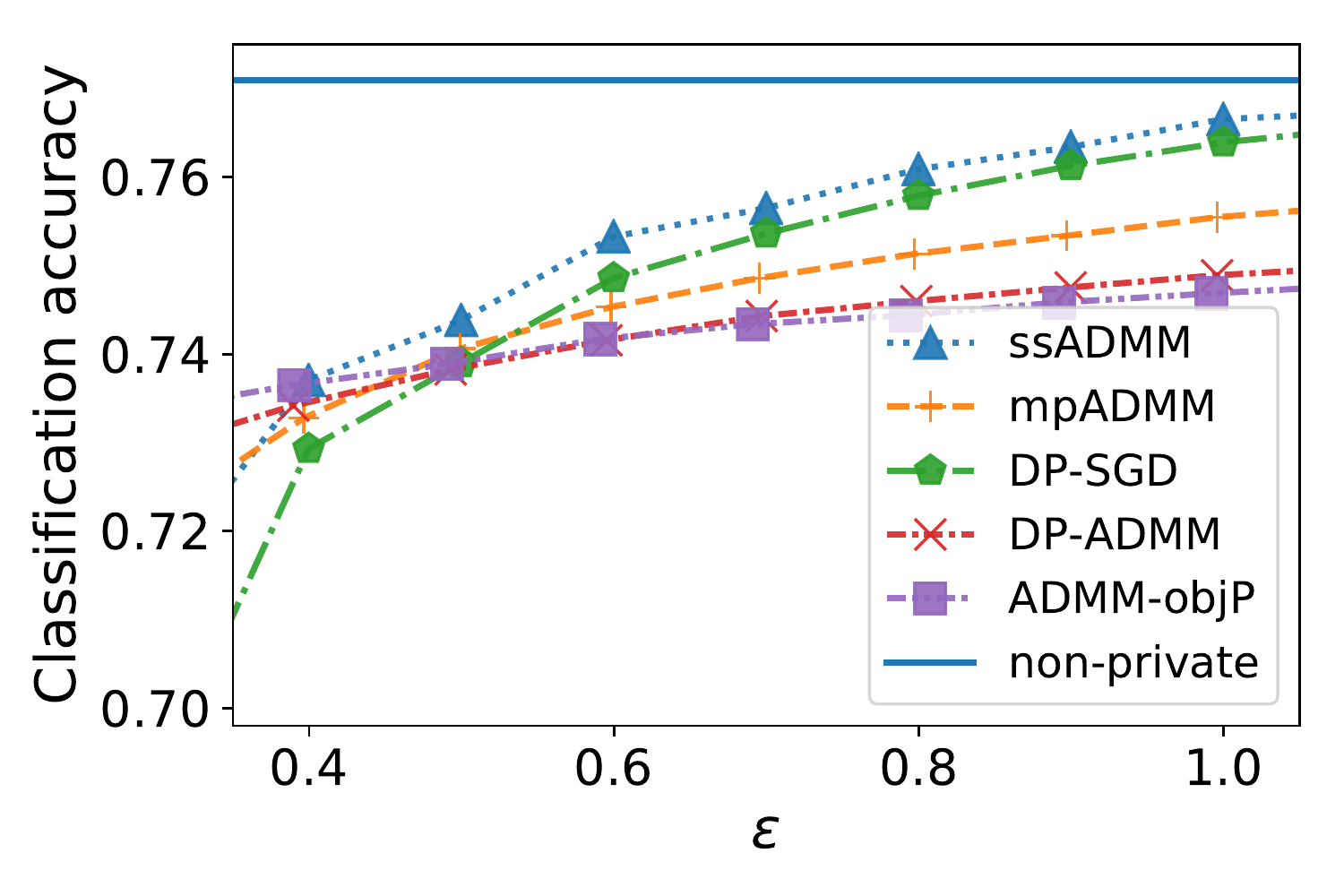}
        \includegraphics[width=\textwidth]{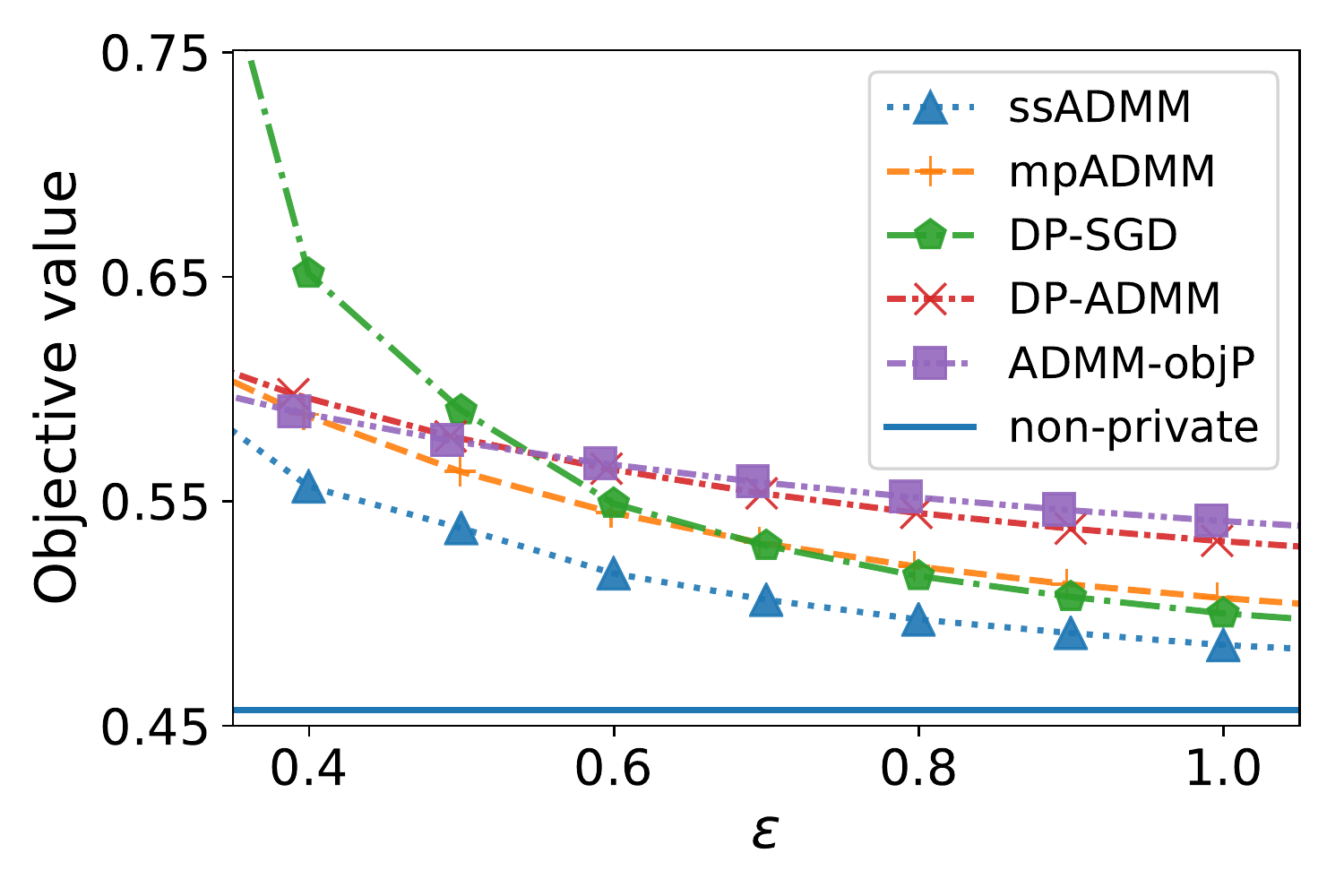}
        \caption{IPUMS-BR $\lambda=0.0001$}
    \end{subfigure}
    \begin{subfigure}[b]{0.245\textwidth}
        \includegraphics[width=\textwidth]{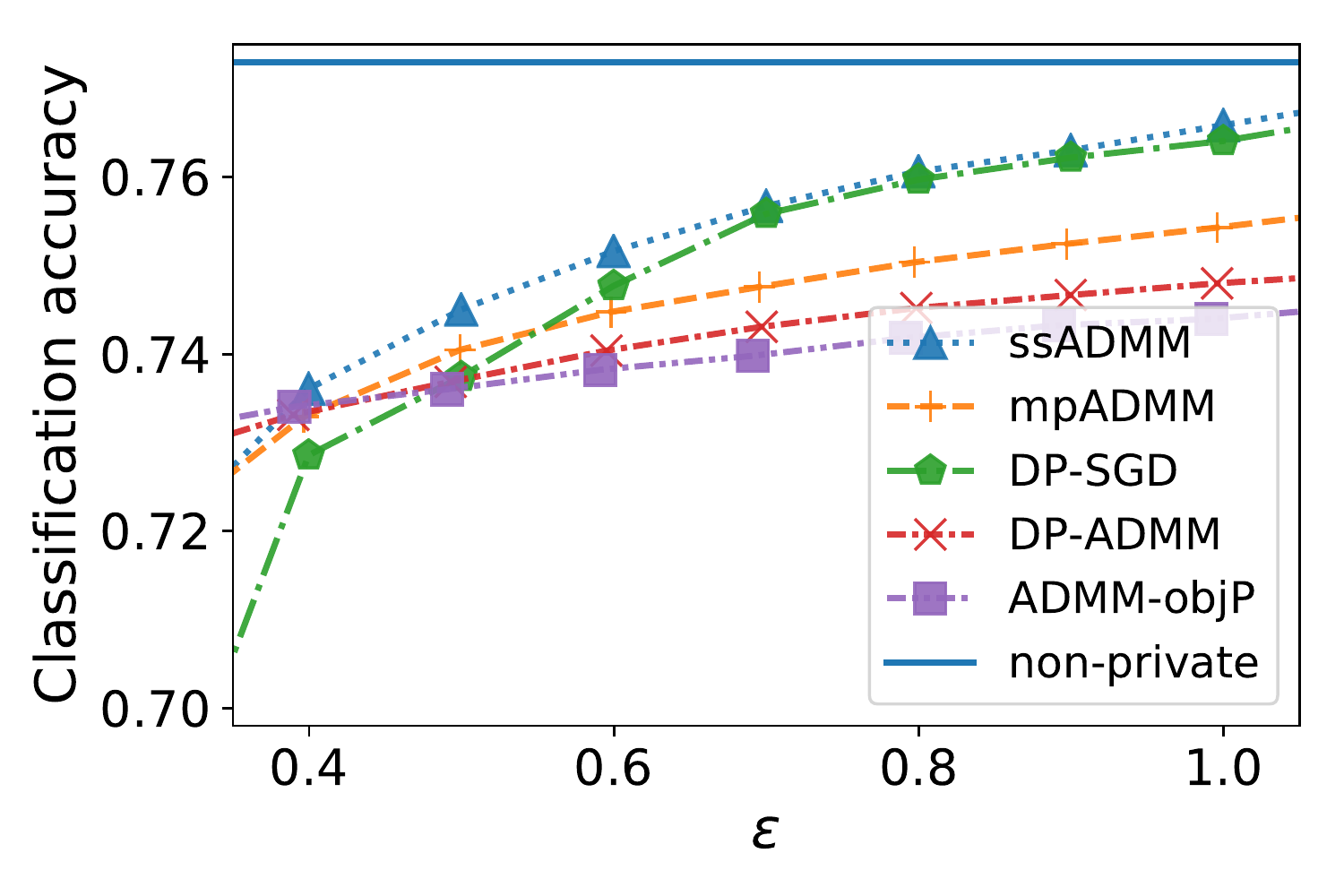}
        \includegraphics[width=\textwidth]{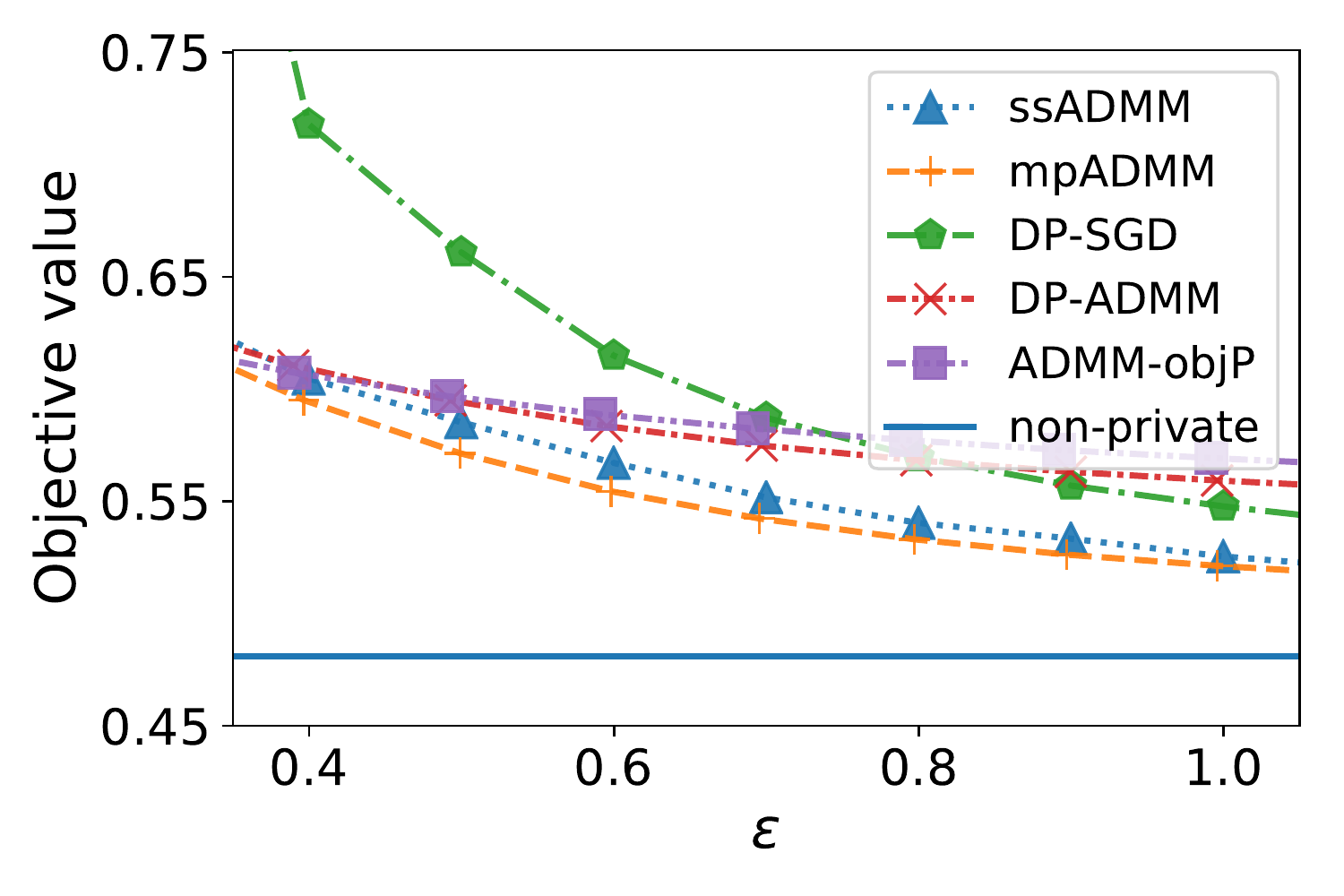}
        \caption{IPUMS-BR $\lambda=0.001$}
    \end{subfigure}
    \caption{Logistic regression result by $\epsilon$ (Top: Classification accuracy; Bottom: Objective value)}
    \label{figure_lr}
\end{figure*}

\begin{figure*}[ht]
    \centering
    \begin{subfigure}[b]{0.245\textwidth}
        \includegraphics[width=\textwidth]{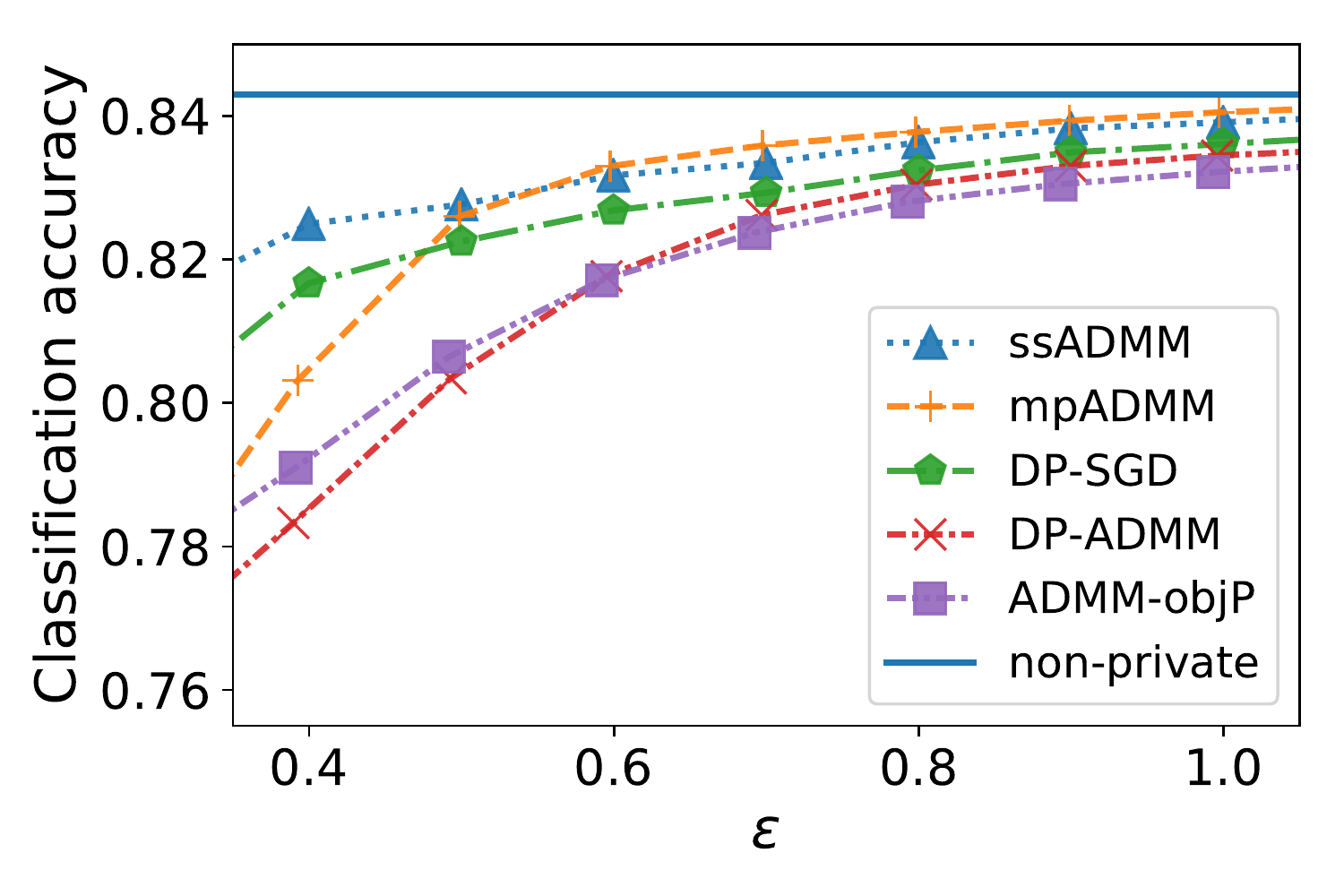}
        \includegraphics[width=\textwidth]{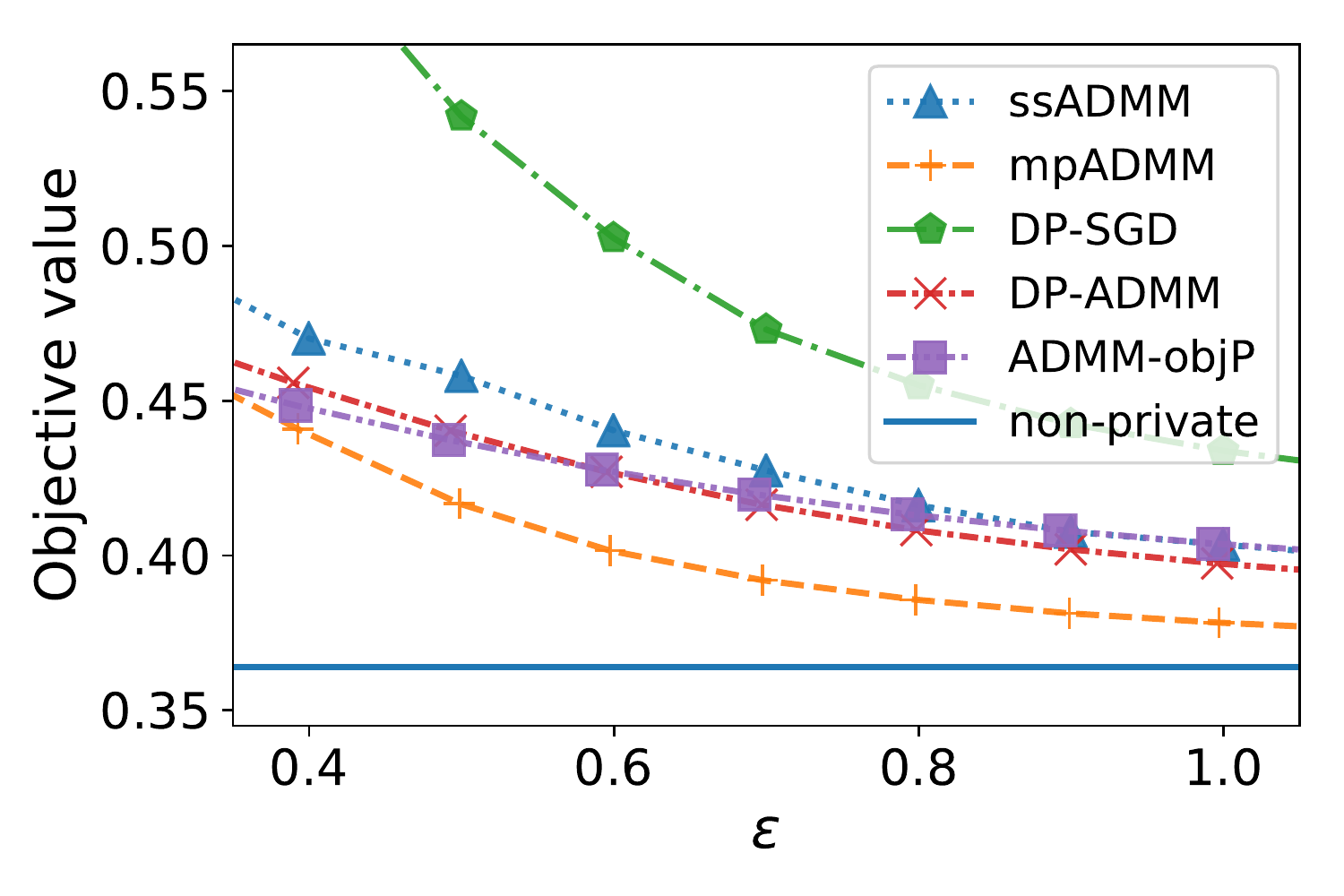}
        \caption{Adult $\lambda=0.0001$}
    \end{subfigure}
    \begin{subfigure}[b]{0.245\textwidth}
        \includegraphics[width=\textwidth]{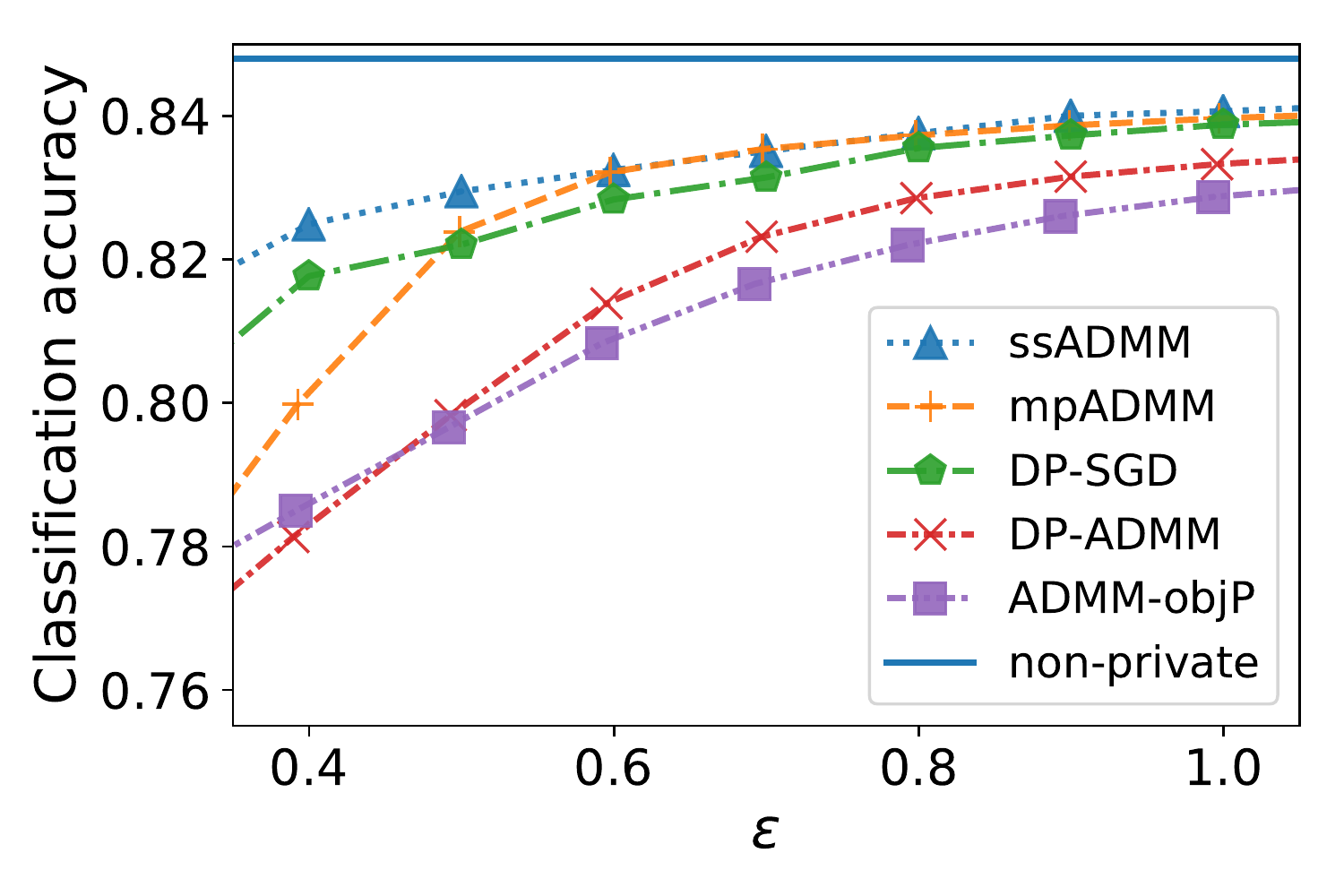}
        \includegraphics[width=\textwidth]{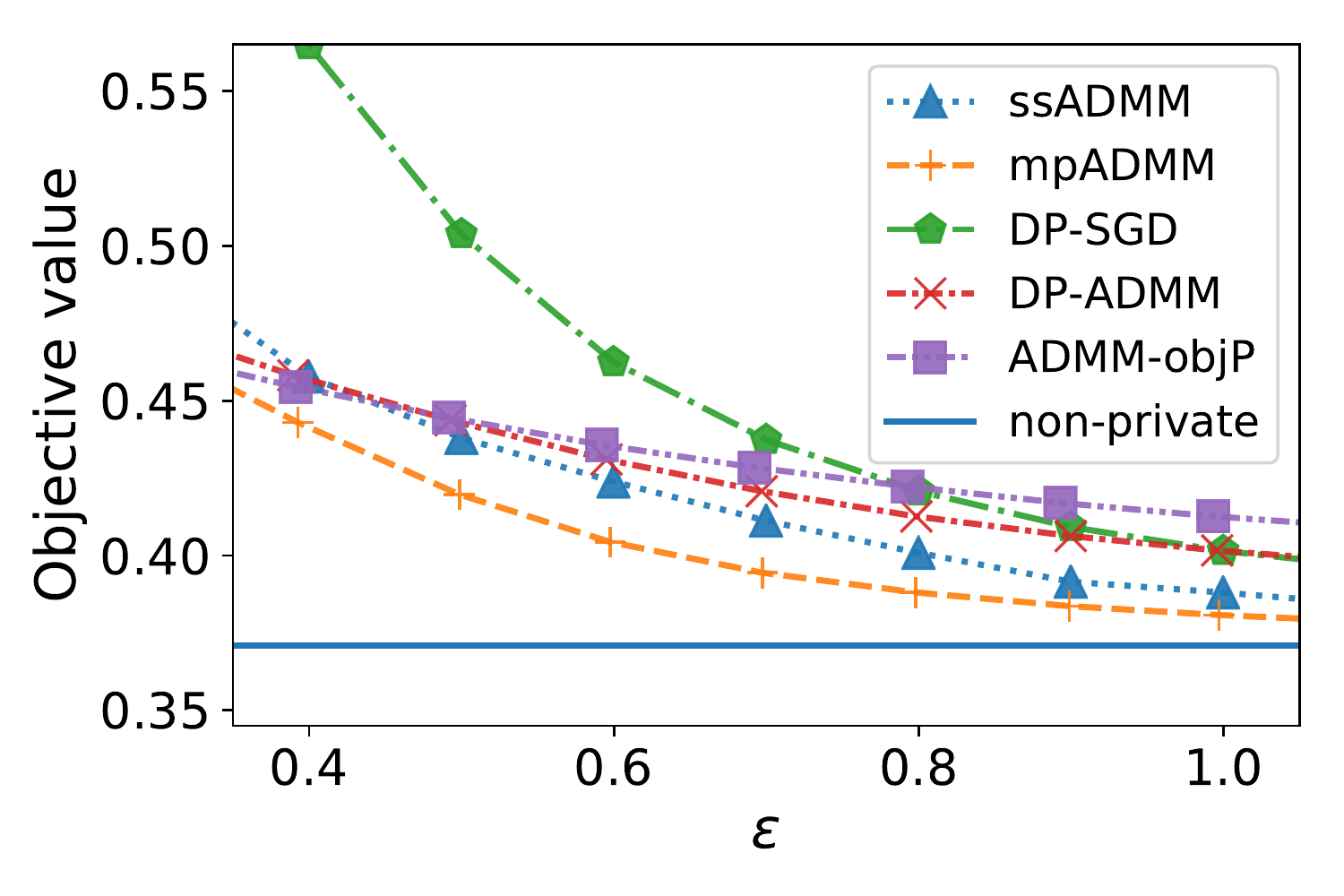}
        \caption{Adult $\lambda=0.001$}
    \end{subfigure}
    \begin{subfigure}[b]{0.245\textwidth}
        \includegraphics[width=\textwidth]{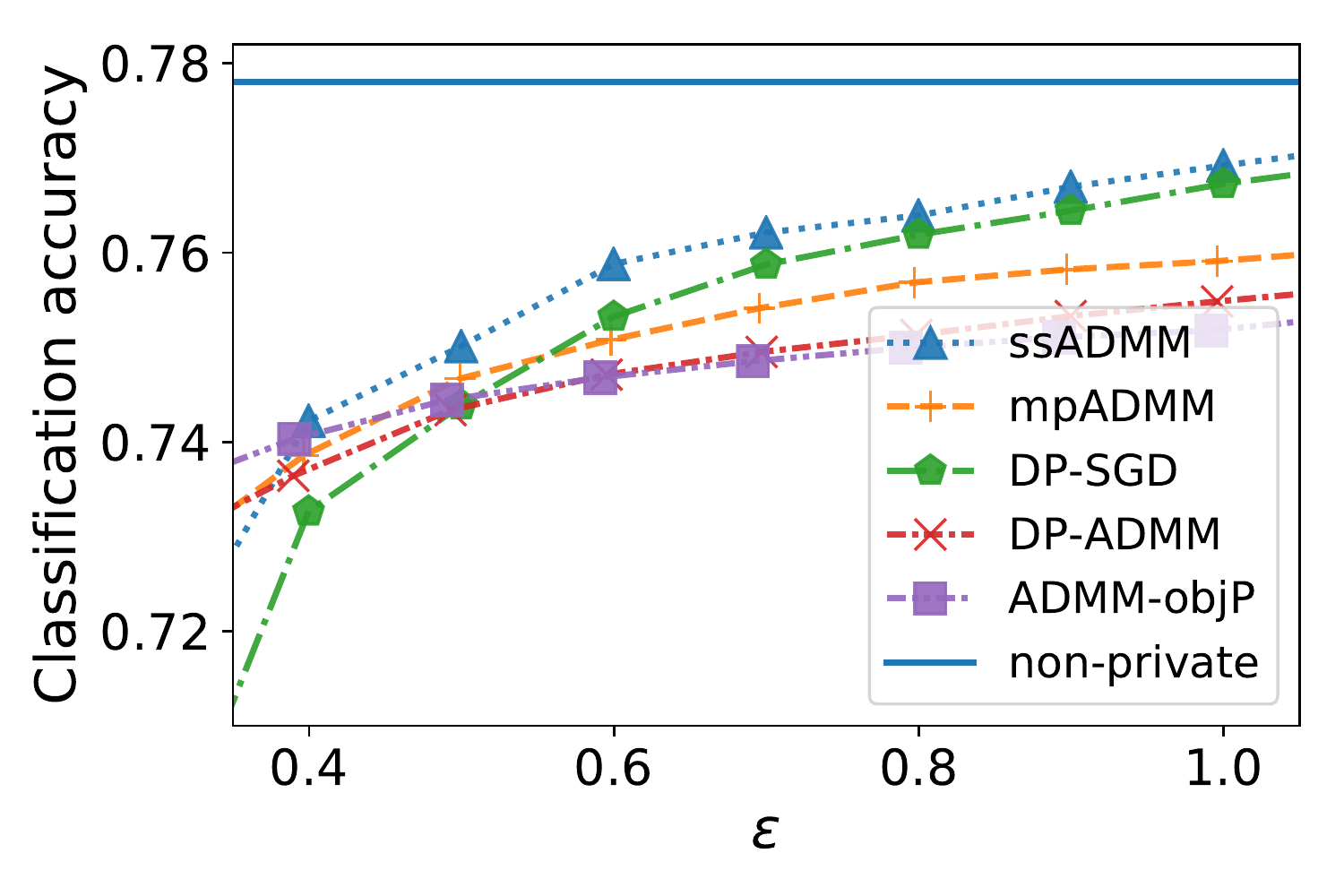}
        \includegraphics[width=\textwidth]{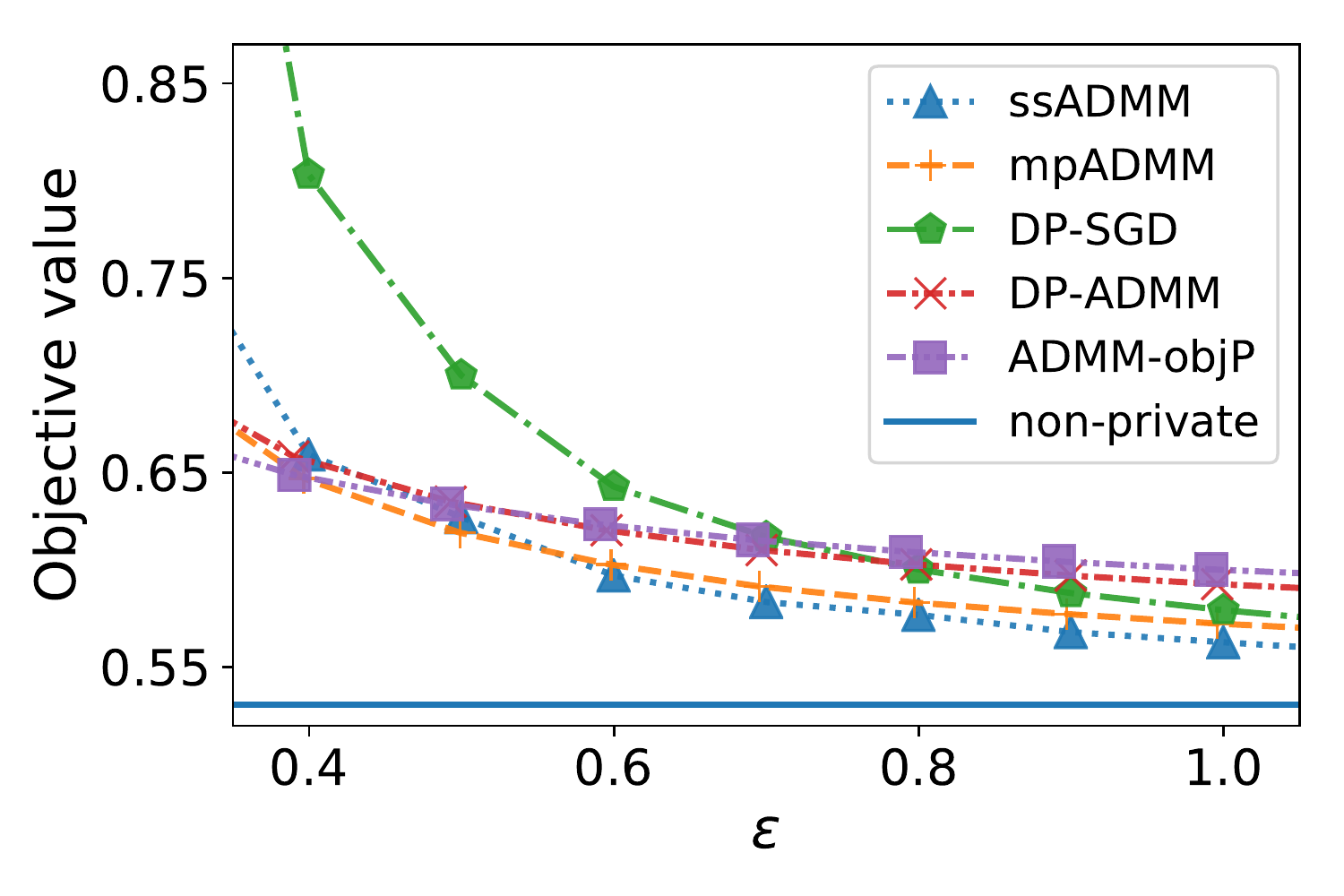}
        \caption{IPUMS-BR $\lambda=0.0001$}
    \end{subfigure}
    \begin{subfigure}[b]{0.245\textwidth}
        \includegraphics[width=\textwidth]{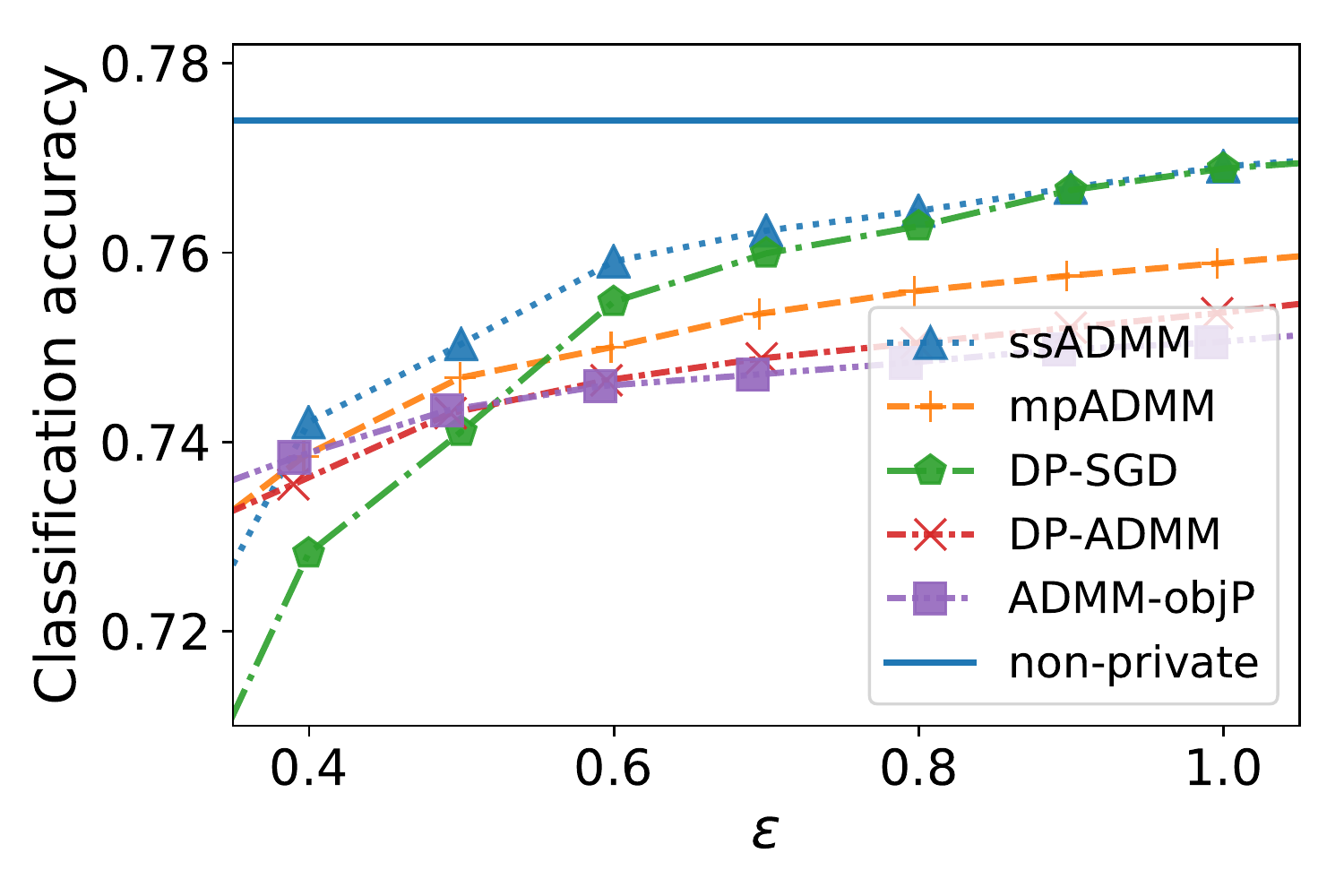}
        \includegraphics[width=\textwidth]{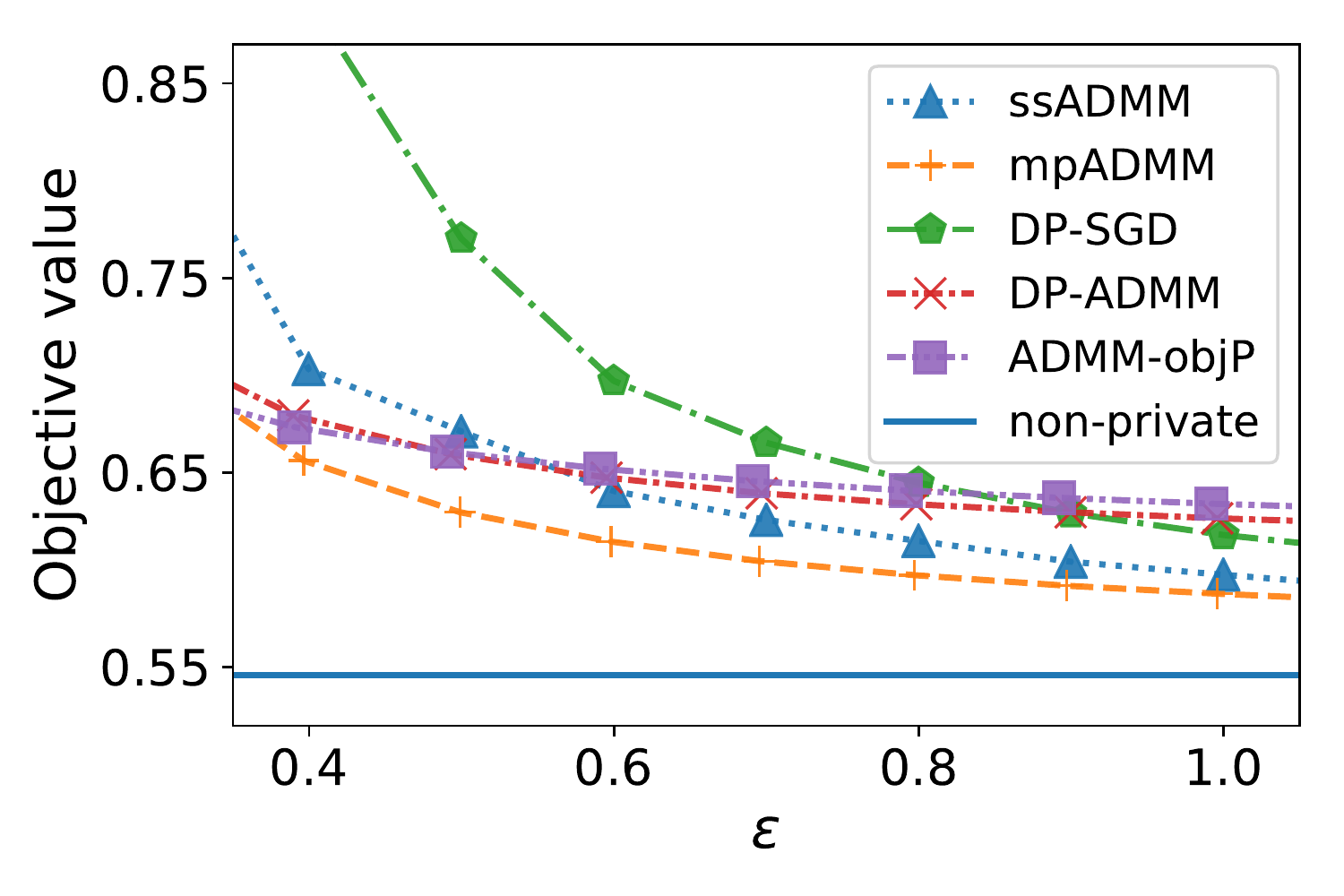}
        \caption{IPUMS-BR $\lambda=0.001$}
    \end{subfigure}
    \caption{Huberized SVM result by $\epsilon$ (Top: Classification accuracy; Bottom: Objective value)}
    \label{figure_svm}
\end{figure*}

Figure \ref{figure_lr} and Figure \ref{figure_svm} plots the testing data accuracy (top) and objective values (bottom) of the algorithms trading off with privacy parameter $\epsilon$, for $L_1$ regularized logistic regression and huberized SVM, respectively. We can see for classification accuracy, ssADMM outperforms other algorithms in most cases. This is in accordance with the experiment in \cite{ouyang2013stochastic} that sADMM outperforms proximal gradient in non-private setting. \cite{azadi2015auxiliary} also show that ADMM based algorithms are more robust to noisy data with outliers. Although DP-SGD has better classification accuracy than mpADMM in some cases, its objective value is usually higher. DP-ADMM and ADMM-objP can achieve high utility when $\epsilon$ gets high, but in our testing range of $\epsilon$, they cannot perform as good as other algorithms. mpADMM performs better in adult dataset than in IPUMS-BR dataset, probably because Adult dataset is more sparse compare to IPUMS-BR, due to it is binary transferred through one-hot encoding. And that model perturbation are more robust to data with irrelevant attributes is in accordance with our observations on the simulated data.

\subsection{Performance on Simulated Data}

\begin{figure*}[ht]
    \centering
    \begin{subfigure}[b]{0.245\textwidth}
        \includegraphics[width=\textwidth]{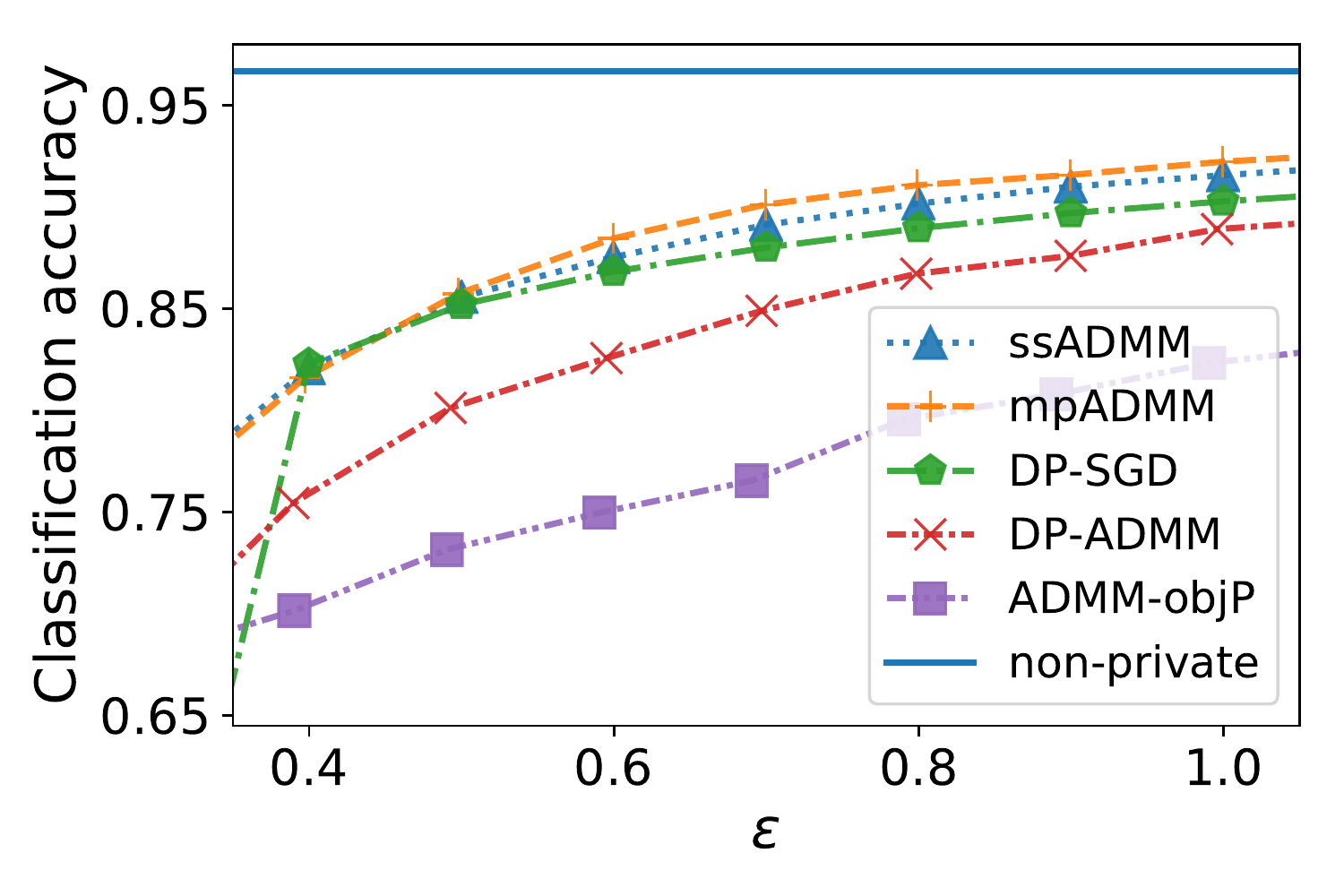}
        \caption{Accuracy $\lambda=0.0001$}
    \end{subfigure}
    \begin{subfigure}[b]{0.245\textwidth}
        \includegraphics[width=\textwidth]{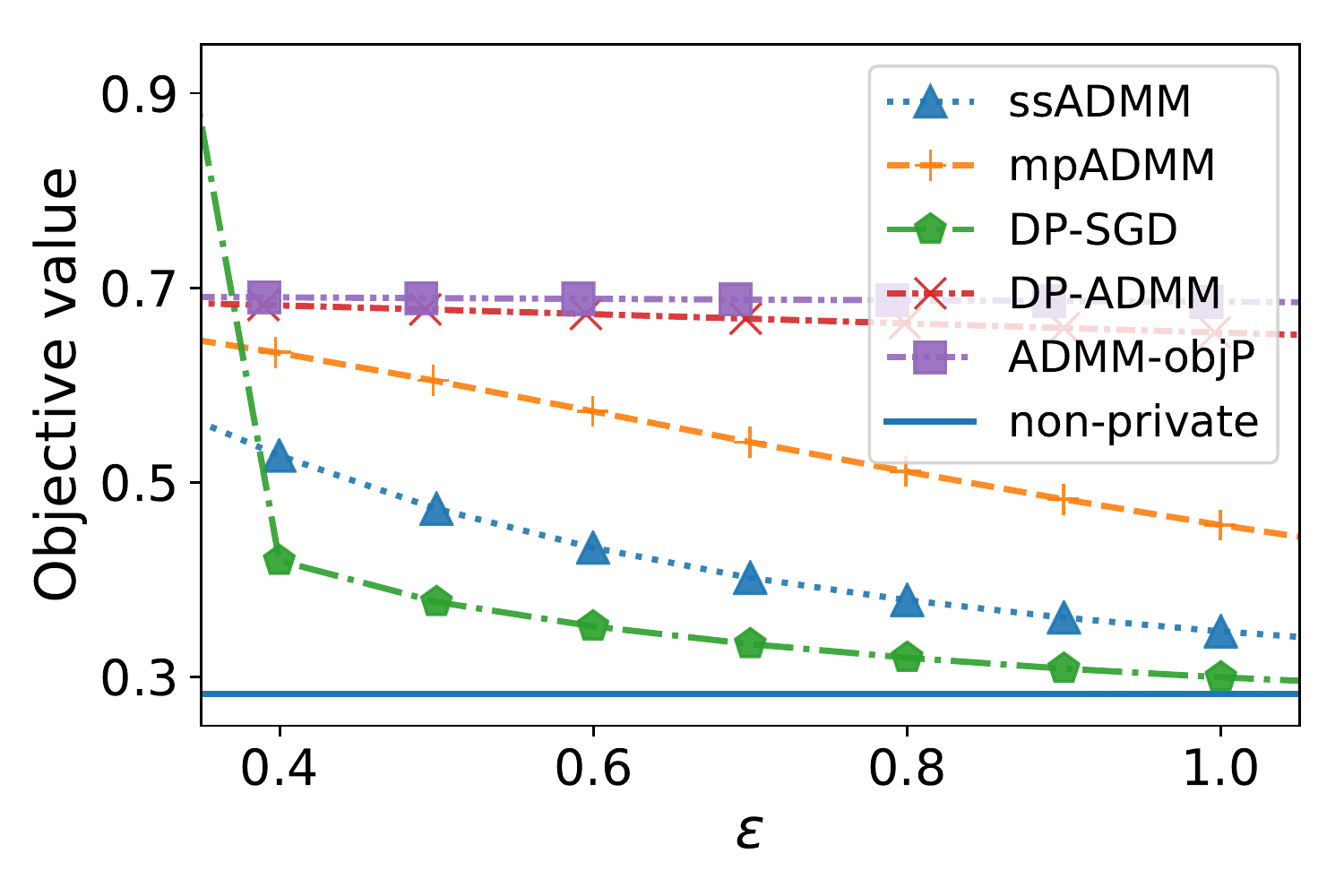}
        \caption{Objective value $\lambda=0.0001$}
    \end{subfigure}
    \begin{subfigure}[b]{0.245\textwidth}
        \includegraphics[width=\textwidth]{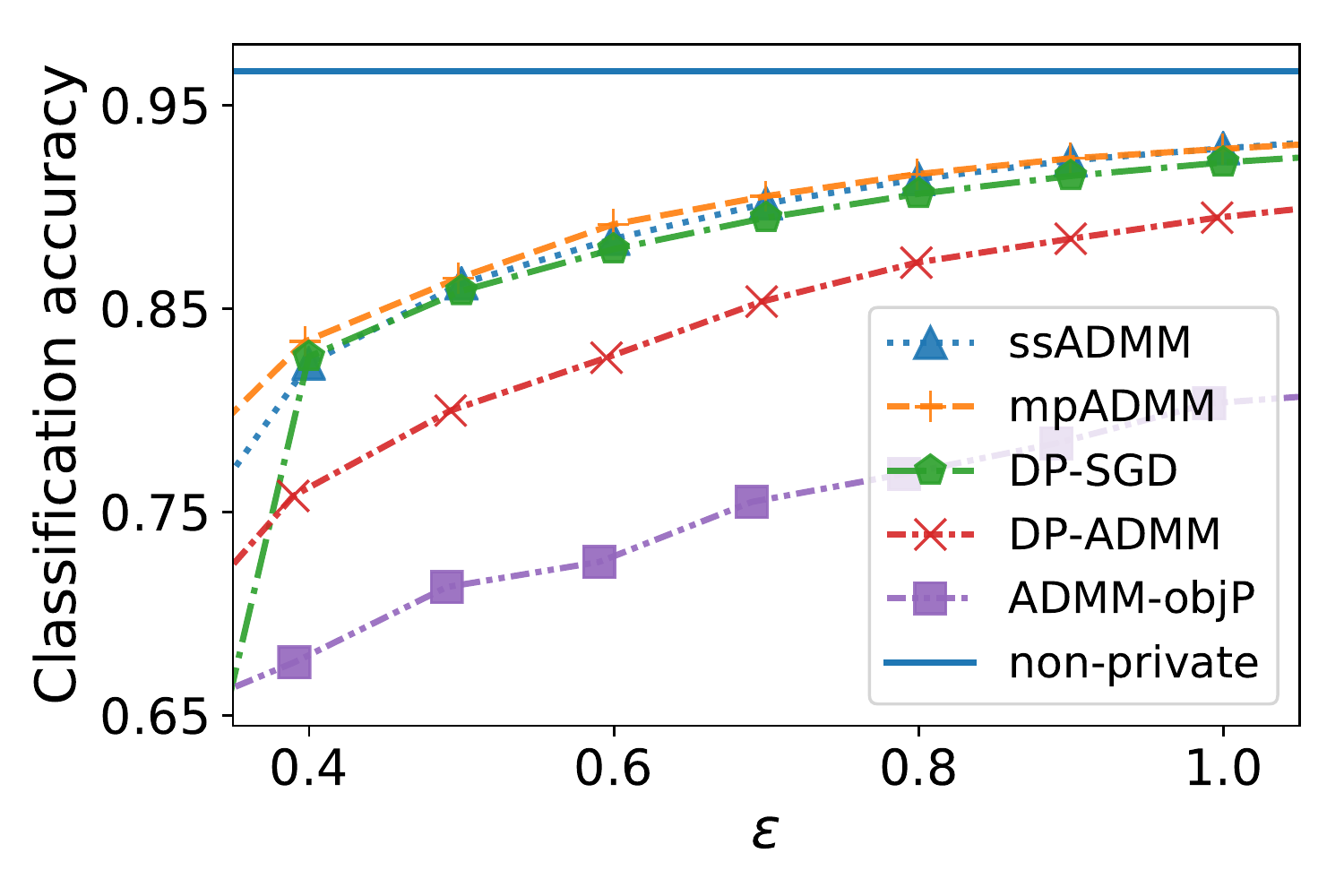}
        \caption{Accuracy $\lambda=0.001$}
    \end{subfigure}
    \begin{subfigure}[b]{0.245\textwidth}
        \includegraphics[width=\textwidth]{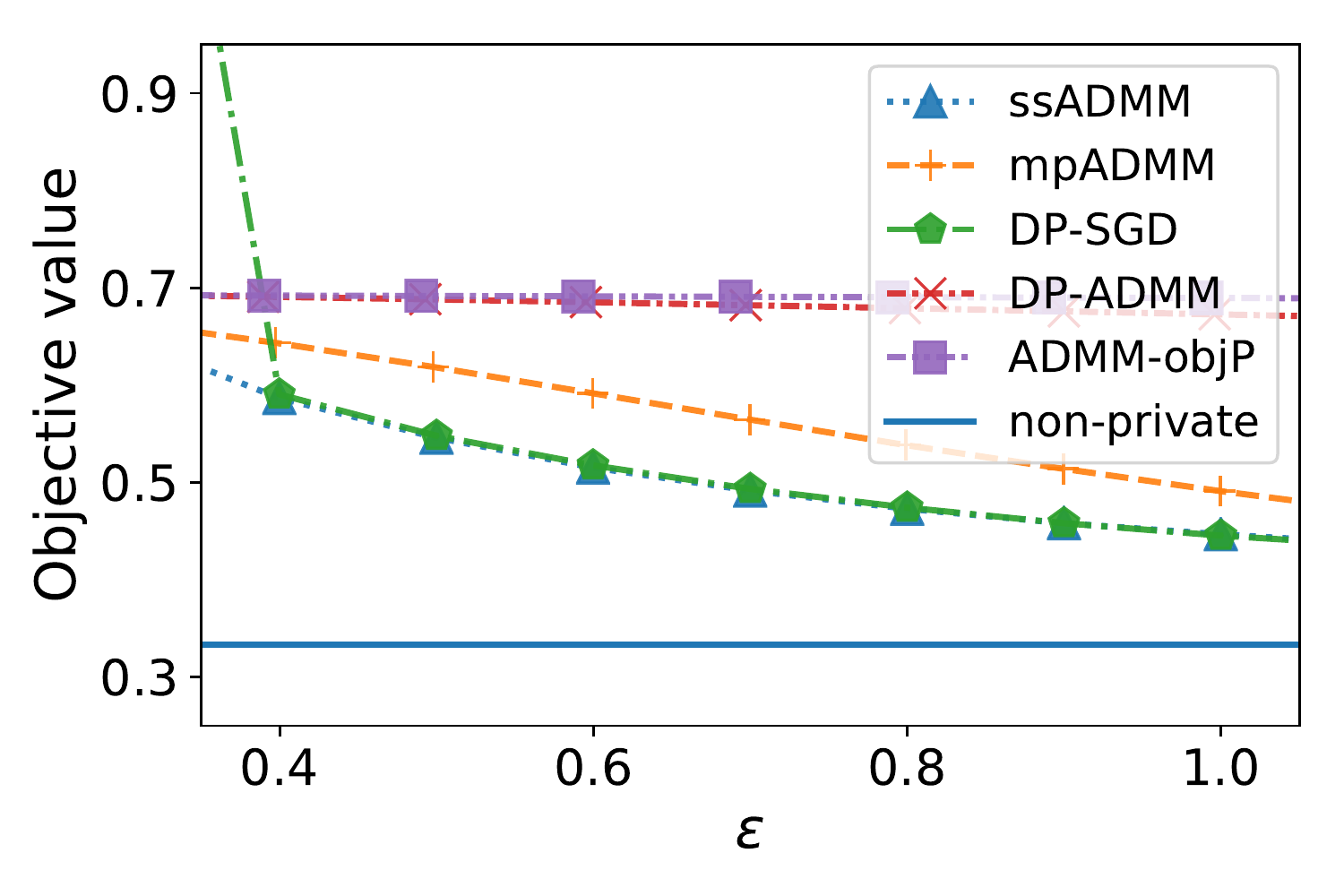}
        \caption{Objective value $\lambda=0.001$}
    \end{subfigure}
    \caption{Classification performance on simulated data}
    \label{figure_sim_perf}
\end{figure*}

\begin{figure*}[ht]
    \centering
    \begin{subfigure}[b]{0.245\textwidth}
        \includegraphics[width=\textwidth]{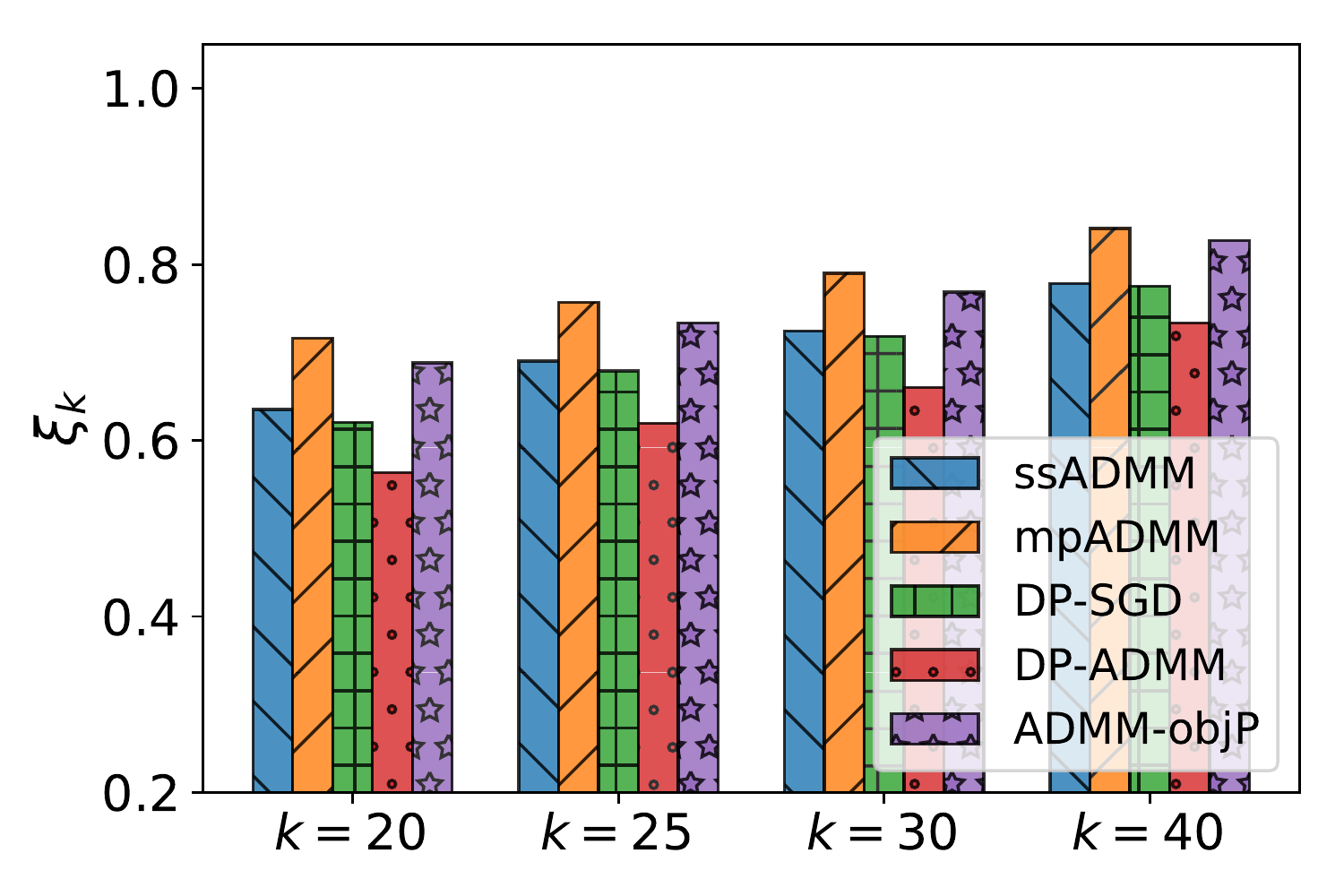}
        \includegraphics[width=\textwidth]{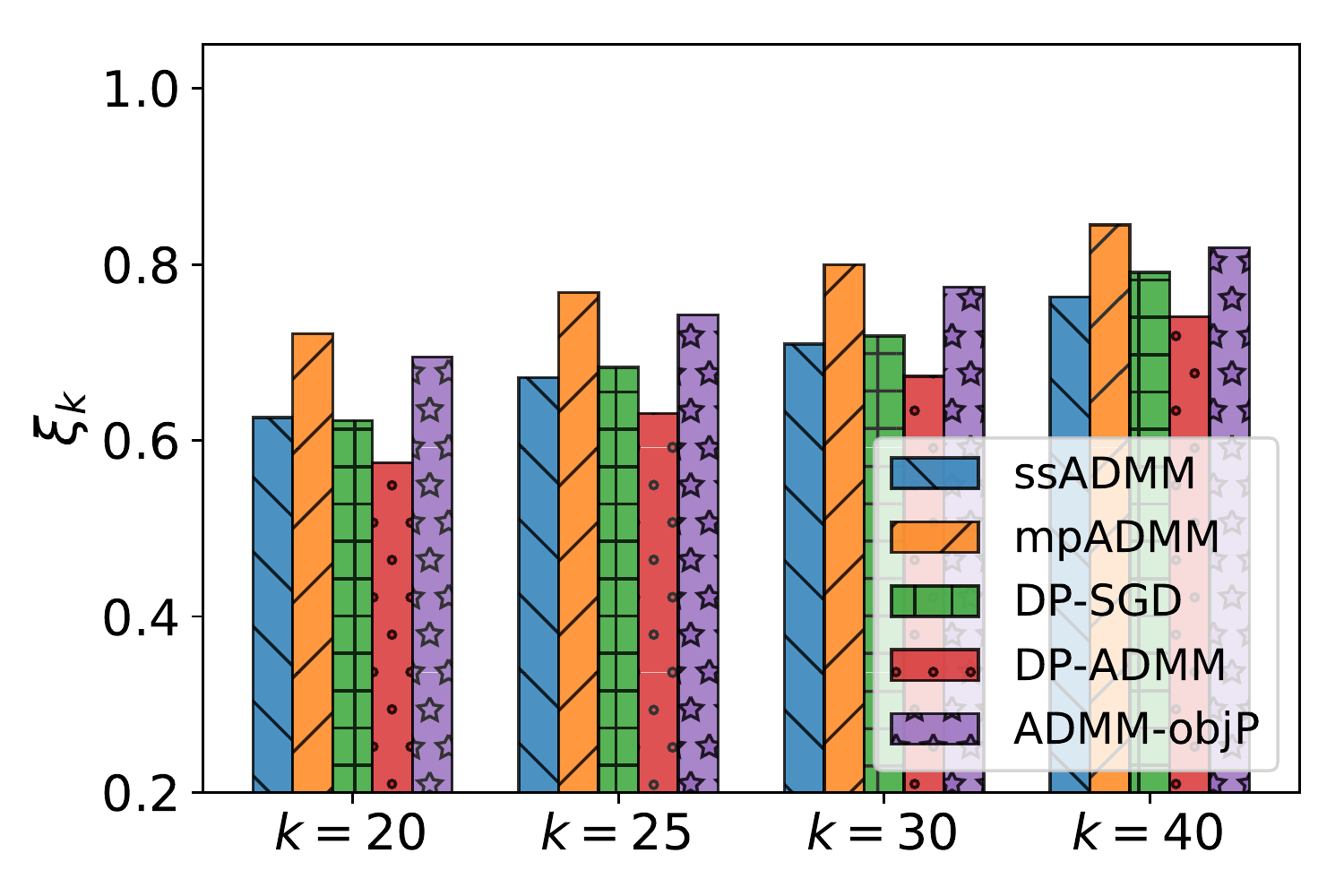}
        \caption{$\epsilon=0.4$}
    \end{subfigure}
    \begin{subfigure}[b]{0.245\textwidth}
        \includegraphics[width=\textwidth]{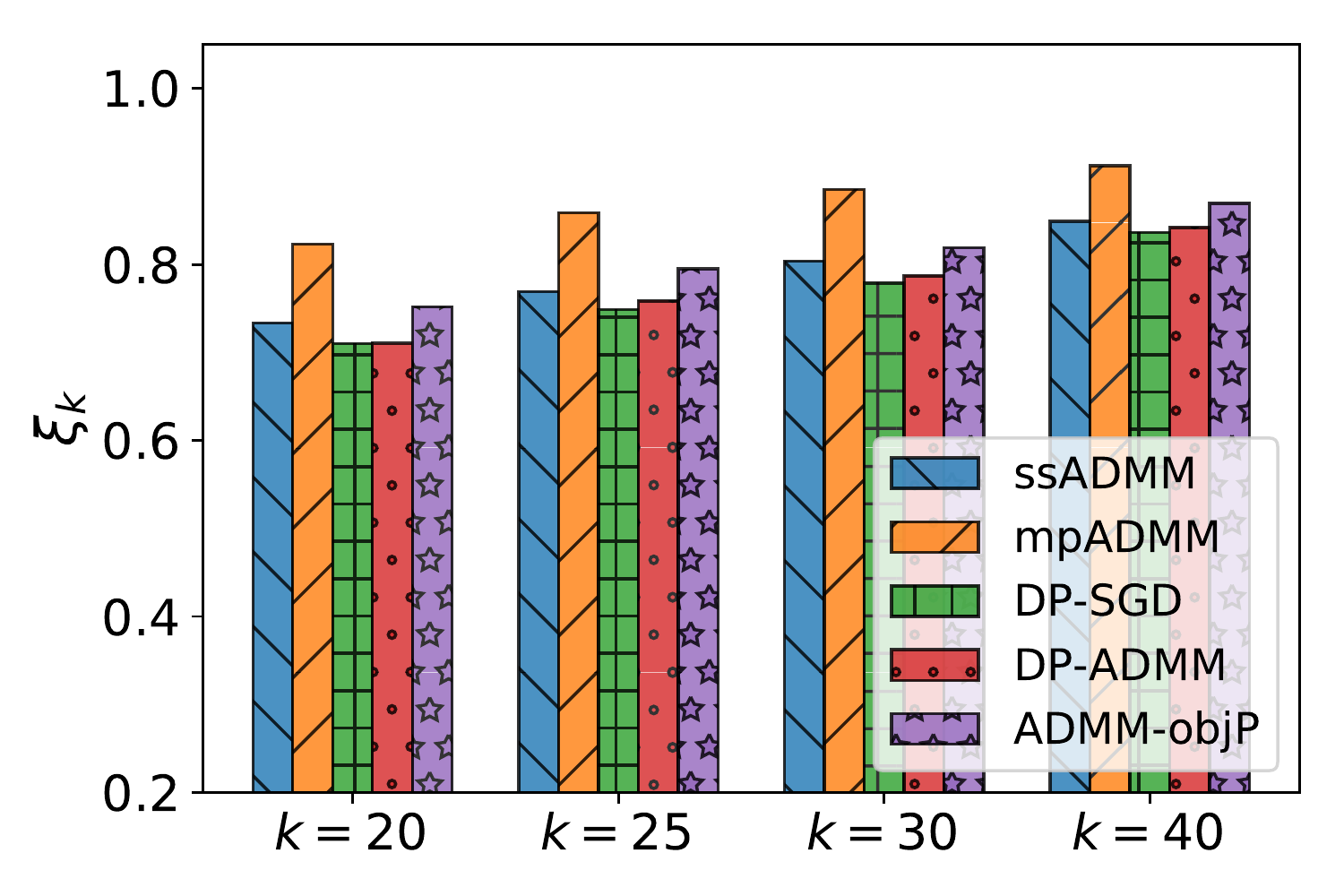}
        \includegraphics[width=\textwidth]{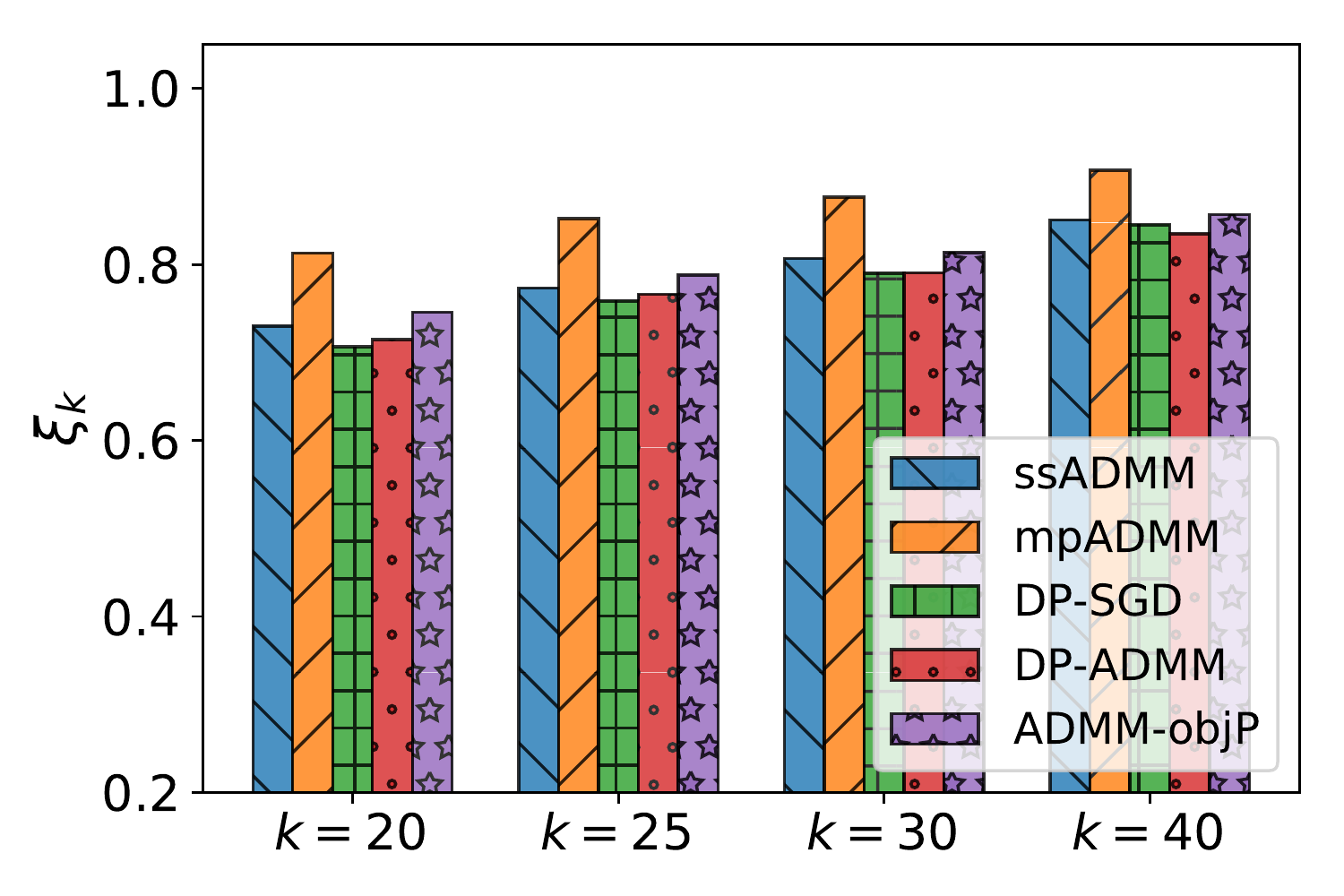}
        \caption{$\epsilon=0.6$}
    \end{subfigure}
    \begin{subfigure}[b]{0.245\textwidth}
        \includegraphics[width=\textwidth]{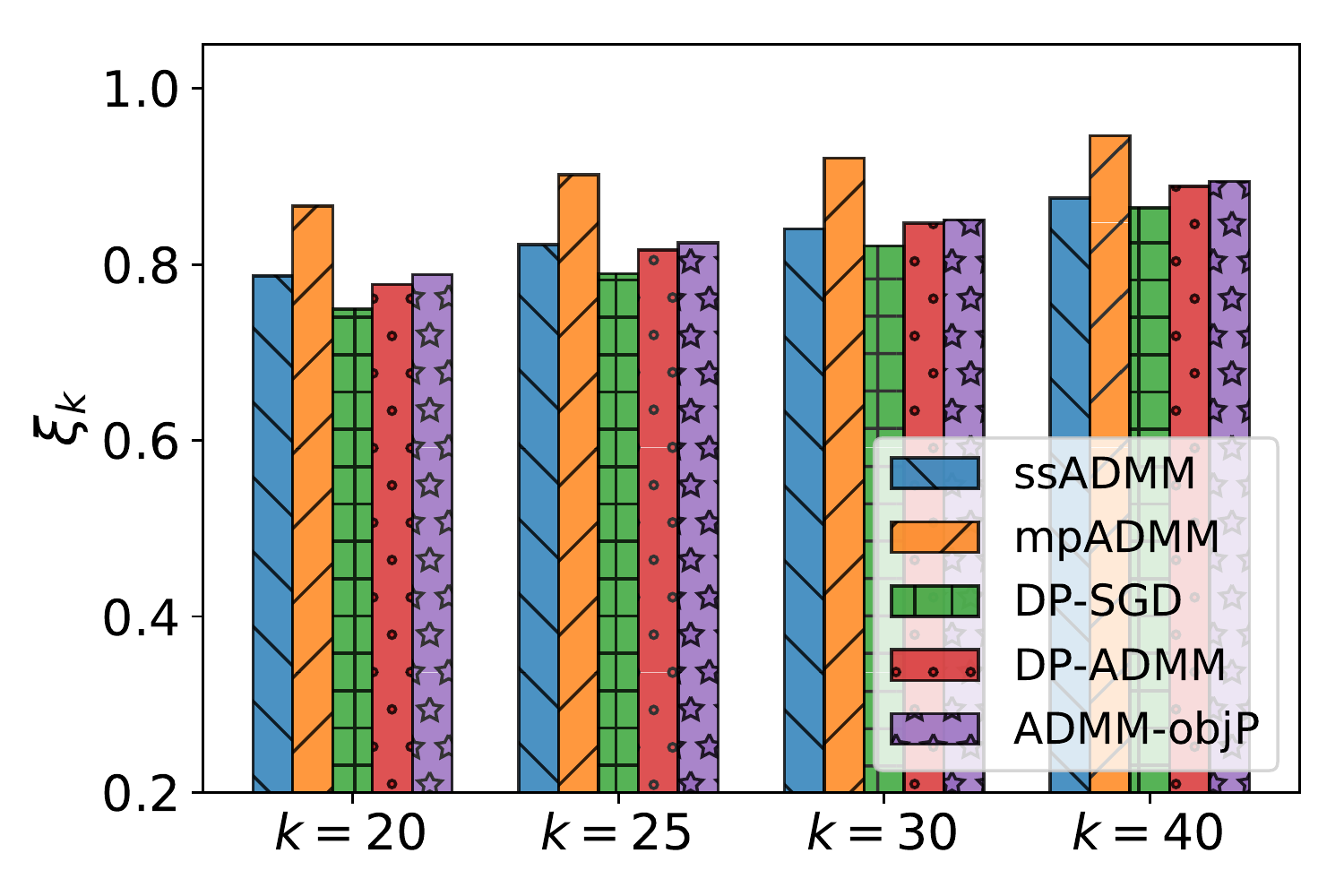}
        \includegraphics[width=\textwidth]{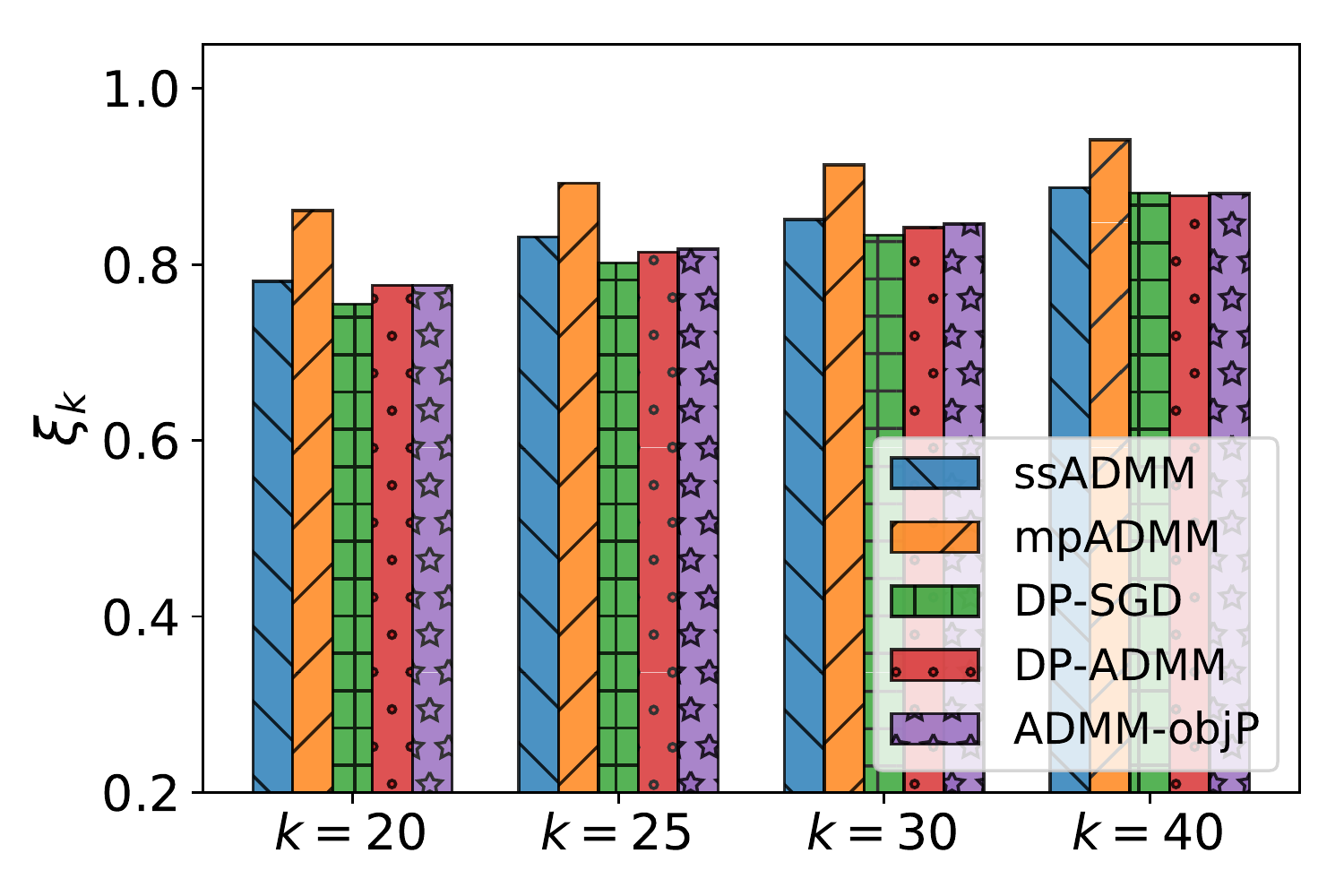}
        \caption{$\epsilon=0.8$}
    \end{subfigure}
    \begin{subfigure}[b]{0.245\textwidth}
        \includegraphics[width=\textwidth]{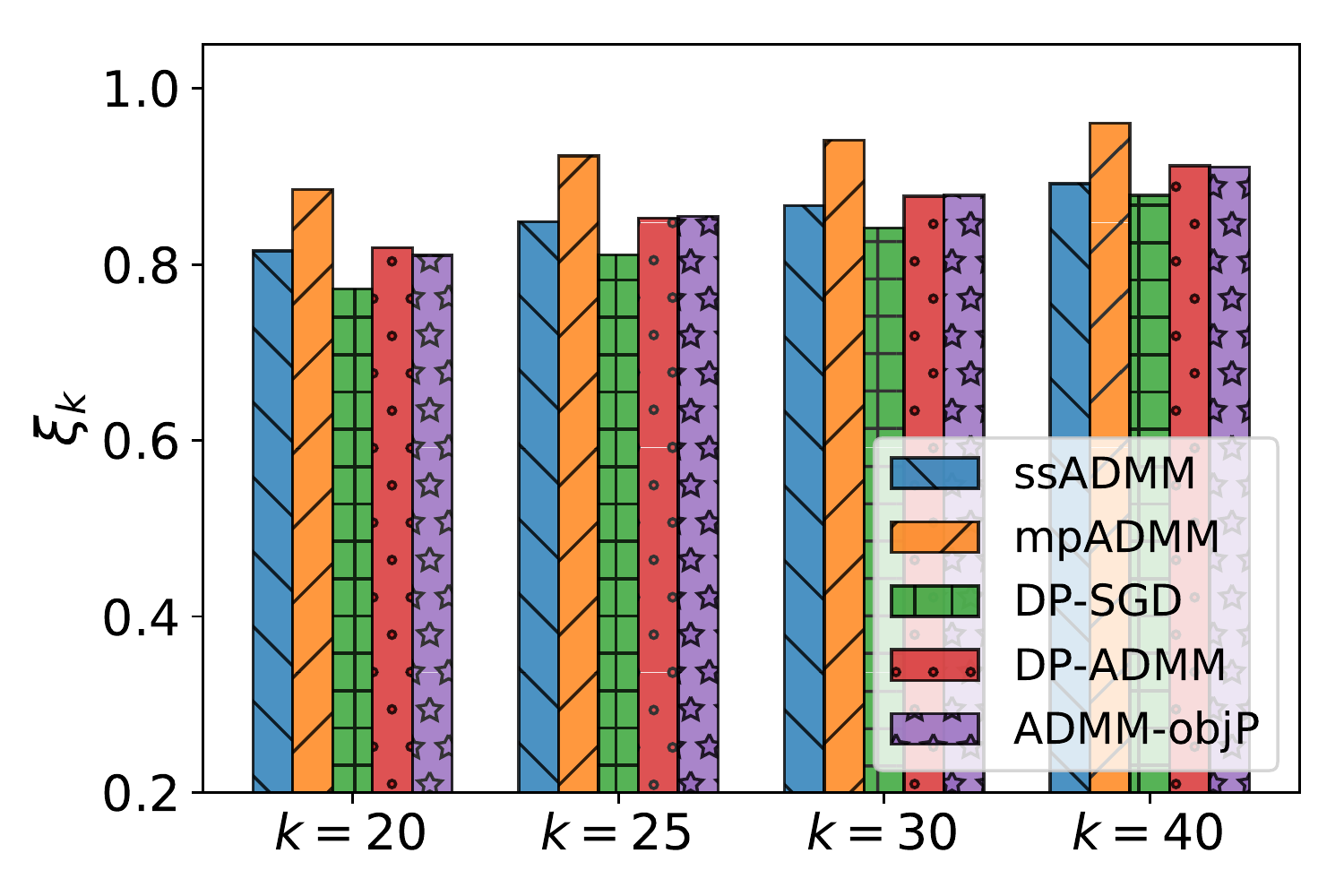}
        \includegraphics[width=\textwidth]{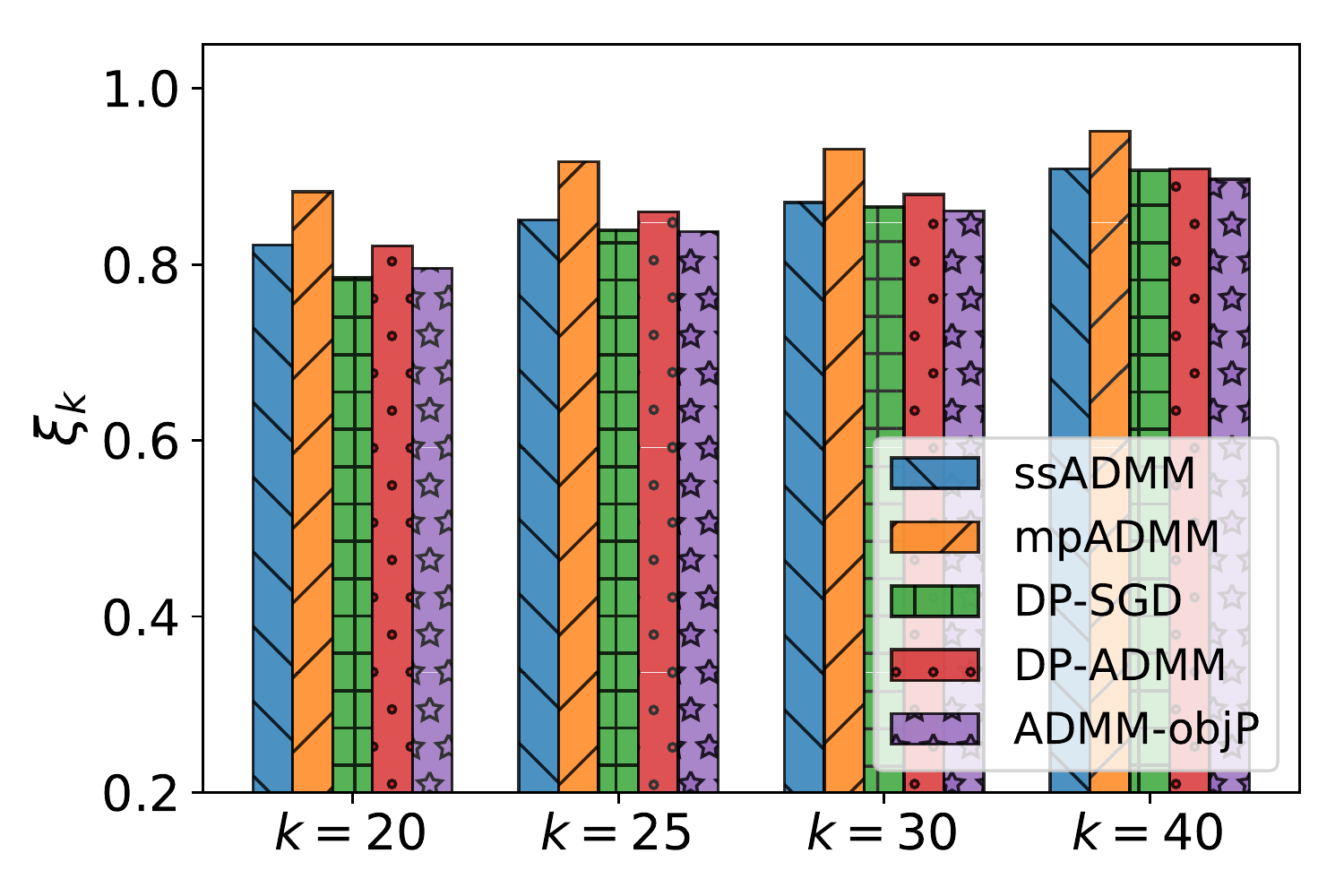}
        \caption{$\epsilon=1.0$}
    \end{subfigure}
    \caption{Attribute selection performance on simulated data (Top: $\lambda=0.0001$; Bottom: $\lambda=0.001$)}
    \label{figure_sim_as}
\end{figure*}

To measure the attribute selection performance, we test how many relevant attributes are selected by each algorithm for $L_1$ regularized logistic regression. Since the dataset is standardized, we can use the magnitude of the coefficient to rank the attributes, due to that noisy perturbation might cause the coefficients of irrelevant attributes slightly differ from zero. 

We define a criterion $\xi_{k}$ to measure the coverage of relevant attributes if top $k$ attributes suggested by the algorithm were selected. 
For example, since we know there are 20 relevant attributes in the simulated data, if we select $k=30$ attributes by magnitude of coefficient, 16 of them are the true relevant ones (i.e. among $x_1, ..., x_{20}$), then $\xi_{30}=16/20=0.8$. 
This make sense because in real case, the number of attributes we choose to select from an attribute ranker depends on the budget we can spend to collect data. We test all algorithms for $k=$ 20, 25, 30, and 40.

Figure \ref{figure_sim_perf} shows the classification performance of each algorithm on the simulated data. For non-private performance, we assume the true model is known. We can see that ssADMM, mpADMM, and DP-SGD have similar performance in classification. Figure \ref{figure_sim_as} shows the performance of attribute selection. Although classification accuracy are close, we can see that mpADMM can detect more relevant attributes, especially in the lower $\epsilon$ range. ADMM-objP, which was originally proposed for feature selection, can outperform ssADMM and DP-SGD for feature selection in low $\epsilon$ while its classification accuracy is behind ssADMM and DP-SGD. However, ADMM-objP usually require much more epochs in training compare to the other algorithms. Therefore, if we know the data is sparse and the major goal is focused on attribute selection, mpADMM is more preferable.

\section{Conclusions}
\label{sec:conclusion}

We present two privatizations of stochastic ADMM under
R\'enyi differential privacy. One algorithm combines gradient
perturbation technique with privacy amplification result to reduce the
total privacy loss throughout the execution.
The other algorithm uses the output perturbation (with numerical
computation of sensitivity) to privately release the solution at the
end of each training epoch.
These algorithms can be used to solve optimization problems with
complex structural regularization that induces sparsity.

\bibliographystyle{unsrt}  
\bibliography{references.bib}  

\appendix

\section{Proof of Theorem 2}

The proof is done by applying similar technique for Theorem 1 in \cite{ouyang2013stochastic}, plus considering the Gaussian noise term added. Define 
$$u:=\Colvec{x, z}, \overline{u}^k := \Colvec{\frac{1}{k}\sum_{i=1}^{k-1}x^i, \frac{1}{k}\sum_{i=1}^{k-1}z^i},\theta(u) := f(x) + h(z),$$
and define
$$w:=\Colvec{x, z, y}, \overline{w}^k := \Colvec{\frac{1}{k}\sum_{i=1}^{k-1}x^i, \frac{1}{k}\sum_{i=1}^{k-1}z^i, \frac{1}{k}\sum_{i=1}^{k-1}y^i}, F(w) := \Colvec{-y, y, x-z}$$
Denote $u^*:=\Colvec{x^*, z^*}$ as the optimal solution, and $\delta_{k+1} := \nabla f(x^{k}, B_k) - \nabla f(x^{k}, D)$, $d_{\mathcal X} := \sup_{x_a, x_b\in \mathcal X}\|x_a - x_b\|$, $d_{y^*} := \|y^0 - y^*\|$.

Therefore, consider the expectation of $\theta(\overline{u}^t) - \theta(u^*)$ after $t$ iterations,
\begin{equation}\label{eq_appendix1}
\begin{split}
    & \mathbb E\bigg[
    \theta(\overline{u}^t) - \theta(u^*) + (\overline{w}^t - w^*)^TF(\overline{w}^t)
    \bigg] \\
    = & \mathbb E
    \bigg[ 
    \theta(\overline{u}^t) - \theta(u^*) + (\overline{x}^t - x^*)^T(-\overline{y}^t) + (\overline{z}^t - z^*)^T(\overline{y}^t) \\
    & + (\overline{y}-y)^T(\overline{x}^t-\overline{z}^t) 
    \bigg] \\
    \leq & \mathbb E\bigg[
    \frac{1}{t}\sum_{k=0}^{t-1} \big[
    \frac{\eta^k}{2}\|\nabla f(x^k, B_k) + \gamma^k\|^2 + \frac{1}{2\eta^k}(\|x^k - x^*\|^2 - \|x^{k+1} - x^*\|^2) \\
    & + \langle \delta_{k+1}, x^* - x^k\rangle
    \big] + \frac{1}{t}\big( \frac{\rho}{2} \|x^*-z^0\|^2 + \frac{1}{2\rho} \|y-y^0\|^2 \big)
    \bigg] \\
    \leq & \mathbb E\bigg[
    \frac{1}{t}\sum_{k=0}^{t-1} \big[
    \frac{\eta^k(C^2 + p\sigma^2)}{2} + \langle \delta_{k+1}, x^* - x^k\rangle \big] \\
    & + \frac{1}{t} \big(
    \frac{d^2_{\mathcal X}}{2\eta^{t-1}} + \frac{\rho}{2}d^2_{y^*} + \frac{1}{2\rho}\|y-y^0\|^2
    \big)
    \bigg] \\
    = & \mathbb E\bigg[
    \frac{1}{t}\sum_{k=0}^{t-1} \big[
    \frac{\eta^k(C^2 + p\sigma^2)}{2} \big]
    + \frac{1}{t} \big(
    \frac{d^2_{\mathcal X}}{2\eta^{t-1}} + \frac{\rho}{2}d^2_{y^*} + \frac{1}{2\rho}\|y-y^0\|^2
    \big)
    \bigg]
\end{split}
\end{equation}

while the first inequality holds by applying an expected version of Lemma 2 in \cite{ouyang2013stochastic}, note that since noisy perturbation $\gamma\sim \mathcal N(0, \sigma^2\textbf{I}_p)$, $\mathbb E[\nabla f(x^k, B_k) + \gamma] = \nabla f(x^k, B_k)$, and $\mathbb E[\|\nabla f(x^k, B_k) + \gamma^k\|^2] \leq \mathbb E[\|\nabla f(x^k, B_k)\|^2] + \mathbb E[\|\gamma\|^2] + 2 \mathbb E[\|\nabla f(x^k, B_k)\|] \mathbb E[\gamma] \leq C^2 + p\sigma^2$. The last equality holds because we assume $x^k$ is independent of $B_k$ (which was used to calculate $x^{k+1}$) is independent of $x^k$, hence $\mathbb E_{B_k | B_{[0:k-1]}} \langle \delta_{k+1}, x^*-x^k\rangle = 0$.

The above holds for all dual variable $y$, hence it holds for $y$ in a ball $\mathcal B_0 = \{y: \|y\|_2\leq \beta\}$. According to (33) in \cite{ouyang2013stochastic}, 
\begin{equation}
    \max_{y\in \mathcal B_0} \{ \theta(\overline{u}^t) - \theta(u^*) + (\overline{w}^t - w^*)^T F(\overline{w}^t) \} = \theta(\overline{u}^t) - \theta(u^*) + \beta\|\overline{x}_t - \overline{z}_t \|
\end{equation}
Therefore, continue on (\ref{eq_appendix1}), we can have
\begin{equation}
\begin{split}
    & \mathbb E\big[
    \theta(\overline{u}^t) - \theta(u^*) + \beta\|\overline{x}_t - \overline{z}_t \|
    \big] \\
    \leq & \mathbb E\bigg[
    \frac{1}{t}\sum_{k=0}^{t-1} \big[
    \frac{\eta^k(C^2 + p\sigma^2)}{2} \big]
    + \frac{1}{t} \big(
    \frac{d^2_{\mathcal X}}{2\eta^{t-1}} + \frac{\rho}{2}d^2_{y^*} + \frac{1}{2\rho}\|y-y^0\|^2
    \big)
    \bigg] \\
    \leq & \mathbb E\bigg[
    \frac{1}{t}\sum_{k=0}^{t-1} \big[
    \frac{\eta^k(C^2 + p\sigma^2)}{2} \big]
    + \frac{1}{t} \big(
    \frac{d^2_{\mathcal X}}{2\eta^{t-1}} + \frac{\rho}{2}d^2_{y^*}
    \big)
    \bigg] \\
    & + \mathbb E\bigg[ \max_{y\in \mathcal B_0}\{\frac{1}{2\rho t}\|y-y_0\|^2 \bigg] \\
    \leq & \frac{1}{t}\bigg( \frac{C^2 + p\sigma^2}{2}\sum_{k=1}^t\eta^k + \frac{d^2_{\mathcal X}}{2\eta^{t-1}} \bigg) + \frac{\rho d_{y^*}^2}{2t} + \frac{\beta^2}{2\rho t}
\end{split}
\end{equation}

So if we choose $\eta^k = \frac{d_{\mathcal X}}{\sqrt{2(C^2 + p\sigma^2)k}} = O(1/\sqrt{k})$, $\mathbb E\big[
\theta(\overline{u}^t) - \theta(u^*) + \beta\|\overline{x}_t - \overline{z}_t \|
\big] \leq \frac{d_{\mathcal X}\sqrt{2(C^2+p\sigma^2)}}{\sqrt{t}} + \frac{\rho d_{y^*}^2}{2t} + \frac{\beta^2}{2\rho t} = O(1/\sqrt{t})$.

\end{document}